\pdfoutput=1
\documentclass{article}
\usepackage[T1]{fontenc}
\usepackage{times}

\usepackage[left=31mm,right=31mm,top=33mm,bottom=20mm]{geometry}
\usepackage{natbib}
\usepackage{booktabs}       % professional-quality tables
\usepackage{amsmath,amssymb,amsfonts}
\usepackage[thmmarks, amsmath, standard]{ntheorem}
\usepackage{bm}
\usepackage{nicefrac}       % compact symbols for 1/2, etc.
\usepackage{microtype}      % microtypography
\usepackage{mathtools}

\usepackage[lined,ruled]{algorithm2e}

\usepackage{enumitem}

\usepackage[usenames,dvipsnames]{color}
\usepackage[colorlinks,
            linkcolor=blue,
            citecolor=blue,
            urlcolor=magenta,
            linktocpage,
            plainpages=false]{hyperref}

\bibliographystyle{plainnat}
\bibpunct{(}{)}{;}{a}{,}{,}

\DeclareMathOperator*{\nth}{^{\text{th}}}

\DeclareMathOperator{\E}{\mathsf{E}}

%% operators

\newcommand{\ind}[1]{\mathop{{\bf 1}\left[#1\right]}}

% spaces
\newcommand{\reals}{\mathbb{R}}

\newcommand{\D}{\mathcal{D}}
\newcommand{\bigO}{\mathcal{O}}
\newcommand{\cE}{\mathcal{E}}

\newcommand{\loss}{\ell}

\DeclarePairedDelimiter{\ceil}{\lceil}{\rceil}

\title{Dying Experts: Efficient Algorithms with Optimal Regret Bounds}

\author{%
  Hamid Shayestehmanesh\thanks{equal contribution}\\Department of Computer Science\\
  University of Victoria\and Sajjad Azami\footnotemark[1]\\Department of Computer Science\\
  University of Victoria\and Nishant A.~Mehta\\
  Department of Computer Science\\
  University of Victoria\\
  \texttt{\{hamidshayestehmanesh, sajjadazami, nmehta\}@uvic.ca}
}

\date{}

\renewtheorem{theorem}{Theorem}
\renewtheorem{lemma}{Lemma}
\renewtheorem{definition}{Definition}
\renewtheorem{corollary}{Corollary}

\begin{document}

\maketitle

\begin{abstract}
We study a variant of decision-theoretic online learning in which the set of experts that are available to Learner can shrink over time. This is a restricted version of the well-studied sleeping experts problem, itself a generalization of the fundamental game of prediction with expert advice. Similar to many works in this direction, our benchmark is the ranking regret. Various results suggest that achieving optimal regret in the fully adversarial sleeping experts problem is computationally hard. This motivates our relaxation where any expert that goes to sleep will never again wake up. We call this setting ``dying experts'' and study it in two different cases: the case where the learner knows the order in which the experts will die and the case where the learner does not. In both cases, we provide matching upper and lower bounds on the ranking regret in the fully adversarial setting. Furthermore, we present new, computationally efficient algorithms that obtain our optimal upper bounds.
\end{abstract}

\section{Introduction} \label{sec:intro}

Decision-theoretic online learning (DTOL) (\cite{littlestone1994weighted,vovk1990aggregating,vovk1998game,
freund1997decision}) is a sequential game between a learning agent (hereafter called \emph{Learner}) and Nature. In each round, Learner plays a probability distribution over a fixed set of experts and suffers loss accordingly. However, in wide range of applications, this ``fixed'' set of actions shrinks as the game goes on. One way this can happen is because experts either get disqualified or expire over time; a key scenario of contemporary relevance is in contexts where experts that discriminate are prohibited from being used due to existing (or emerging) anti-discrimination laws. 
Two prime examples are college admissions and deciding whether incarcerated individuals should be granted parole; here the agent may rely on predictions from a set of experts in order to make decisions, and naturally experts detected to be discriminating against certain groups should not be played anymore. However, the standard DTOL setting does not directly adapt to this case, i.e., for a given round it does not make sense nor may it even be possible to compare Learner's performance to an expert or action that is no longer available.

Motivated by cases where the set of experts can change, a reasonable benchmark is the \emph{ranking regret} (\cite{kleinberg2010regret,kale2016hardness}), for which Learner competes with the best ordering of the actions (see \eqref{eq:ranking-regret-definition} in Section~\ref{sec:prob-setup-rel-work} for a formal definition). The situation where the set of available experts can change in each round is known as the \emph{sleeping experts} setting, and unfortunately, it appears to be computationally hard to obtain a no-regret algorithm in the case of adversarial payoffs (losses in our setting) and adversarial availability of experts (\cite{kanade2014learning}). 
This motivates the question of whether the optimal regret bounds can be achieved efficiently for the case where the set of experts can only shrink, which we will refer to as the ``dying experts'' setting. Applying the results of \cite{kleinberg2010regret} to the dying experts problem only gives $\bigO(\sqrt{TK\log K})$ regret, for $K$ experts and $T$ rounds, and their strategy is computationally inefficient.

In more detail, the strategy in \cite{kleinberg2010regret} is to define a permutation expert (our terminology) that is identified by an ordering of experts, where a permutation expert's strategy is to play the first awake expert in the ordering. They then run Hedge (\cite{freund1997decision}) on the set of all possible permutation experts over $K$ experts. 
Although this strategy competes with the best ordering, the per-round computation of running Hedge on $K!$ experts is $\bigO(K^K)$ if na\"ively implemented, and the results of \cite{kanade2014learning} suggest that no efficient algorithm --- one that uses computation $\mathrm{poly}(K)$ per round --- can obtain regret that simultaneously is $o(T)$ and $\mathrm{poly}(K)$.

However, in the dying experts setting, we show that many of these $K!$ orderings are redundant and only $\bigO(2^K)$ of them are ``effective''. The notion of effective experts (formally defined in Section~\ref{sec:number-of-effective}) is used to refer to a minimal set of orderings such that each ordering in the set will behave uniquely in hindsight. The behavior of an ordering is defined as how it uses the initial experts in its predictions over $T$ rounds. Interestingly, it turns out that this structure also allows for an efficient implementation of Hedge which, as we show, obtains optimal regret in the dying experts setting. The key idea that enables an efficient implementation is as follows. Our algorithms group orderings with identical behavior into one group, where there can be at most $K$ groups at each round. When an expert dies, the orderings in one of the groups are forced to predict differently and therefore have to redistribute to the other groups. This splitting and rejoining behavior occurs in a fixed pattern which enables us to efficiently keep track of the weight associated with each group. 

In certain scenarios, Learner might be aware of the order in which the experts will become unavailable. For example, in online advertising, an ad broker has contracts with their providers and these contracts may expire in an order known to Learner. Therefore, we will study the problem in two different settings: when Learner is aware of this order and when it is not.

\paragraph{Contributions.} 
Our first main result is an upper bound on the number of effective experts (Theorem~\ref{thm:number-of-effective-experts});  this result will be used for our regret upper bound in the known order case. Also, in preparation for our lower bound results, we prove a fully non-asymptotic lower bound on the minimax regret for DTOL (Theorem~\ref{thm:dtol-lower-bound}). Our main lower bounds contributions are minimax lower bounds for both the unknown and known order of dying cases (Theorems~\ref{thm:unknown-lower}~and~\ref{thm:known-lower}). 
In addition, we provide strategies to achieve optimal upper bounds for unknown and known order of dying (Theorems~\ref{thm:unknown-upper}~and~\ref{thm:known-upper} respectively), 
along with efficient algorithms for each case. This is in particular interesting since, in the framework of sleeping experts, the results of \cite{kanade2014learning} suggest that no-regret learning is computationally hard, but we show that it is efficiently achievable in the restricted problem. 
Finally, in Section~\ref{sec:extend-alg}, we show how to generalize our algorithms to other algorithms with adaptive learning rates, either adapting to unknown $T$ or achieving far greater forms of adaptivity like in AdaHedge and FlipFlop (\cite{derooij2014follow}). 

All formal proofs not found in the main text can be found in the appendix.

\section{Background and related work} \label{sec:prob-setup-rel-work}

The DTOL setting~(\cite{freund1997decision}) is a variant of prediction with expert advice (\cite{littlestone1994weighted,vovk1990aggregating,vovk1998game}) in which Learner receives an example $x_t$ in round $t$ and plays a probability distribution $\bm{p}_t$ over $K$ actions. Nature then reveals a loss vector $\bm{\ell}_t$ that indicates the loss for each expert. Finally, Learner suffers a loss $\hat{\ell}_t := \bm{p}_t \cdot \bm{\ell}_t = \sum_{i=1}^{K} p_{i,t}\ell_{i,t}$.

In the dying experts problem, we assume that the set of experts can only shrink. More formally, for the set of experts $E = \{ e_1, e_2, \dots e_K \}$, at each round $t$, Nature chooses a non-empty set of experts $E_a^t$ to be available such that $E^{t+1}_a \subseteq E_a^t$ for all $t \in \{1, \dots, T-1\}$. In other words, in some rounds Nature sets some experts to sleep, and they will never be available again. Similar to \cite{kleinberg2010regret, kanade2009sleeping, kanade2014learning}, we adopt the ranking regret as our notion of regret. Before proceeding to the definition of ranking regret, let us define $\pi$ to be an ordering over the set of initial experts $E$. We use the notion of orderings and permutation experts interchangeably. Learner can now predict using $\pi \in \Pi$, where $\Pi$ is the set of all the orderings. Also, denote by $\sigma^t(\pi)$ the first alive expert of ordering $\pi$ in round $t$; expert $\sigma^t(\pi)$ is the action that will be played by $\pi$. The cumulative loss of an ordering $\pi$ with respect to the available experts $E_a^t$ is the sum of the losses of $\sigma^t(\pi)$ at each round. We can now define the ranking regret:
\begin{equation}\label{eq:ranking-regret-definition}
  R_\Pi(1,T) = \sum_{t=1}^{T} \hat{\ell}_t - \min_{\pi \in \Pi} \sum_{t=1}^T \ell_{\sigma^t(\pi),t} \,\, .
\end{equation}
Since we will use the notion of classical regret in our proofs, we also provide its formal definition:
\begin{equation}\label{eq:classical-regret-definition}
  R_E(1,T) = \sum_{t=1}^{T} \hat{\ell}_t - \min_{i\in[K]} \sum_{t=1}^{T} \ell_{i,t} \,\, .
\end{equation}
We use the convention that the subscript of a regret notion $R$ represents the set of experts against which we compare Learner's performance. Also, the argument in parentheses represents the set of rounds in the game. For example, $R_\Pi(1,T)$ represents the regret over rounds $1$ to $T$ with the comparator set being all permutation experts $\Pi$. Also, we assume that $\ell_{i,t} \in [0,1]$ for all $i \in [K], t \in [T]$.

Similar to the definition of $E_a^t$, let $E_d^t := E \setminus E_a^t$ be the set of dead experts at the start of round $t$. We refer to a round as a ``night'' if any expert becomes unavailable on the next round. A ``day'' is defined as a continuous subset of rounds where the subset starts with a round after a night and ends with a night. As an example, if any expert become unavailable at the beginning of round $t$, we refer to round $t-1$ as a night (and we say the expert dies on that night) and the set of rounds $\{t, t+1 \dots, t'\}$ as a day, where $t'$ is the next night. 
We denote by $m$ the number of nights throughout a game of $T$ rounds.

\paragraph{Related work.} 
The papers \cite{freund1997using} and \cite{blum1997empirical} initiated the line of work on the sleeping experts setting. These works were followed by \cite{blum2007external}, which considered a different notion of regret and a variety of different assumptions. In \cite{freund1997using}, the comparator set is the set of all probability vectors over $K$ experts, while we compare Learner's performance to the performance of the best ordering. In particular, the problem considered in \cite{freund1997using} aims to compare Learner's performance to the best mixture of actions, which also includes our comparator set (orderings). However, in order to recover an ordering as we define, one needs to assign very small probabilities to all experts except for one (the first alive action), which makes the bound in \cite{freund1997using} trivial. As already mentioned, we assume the set $E_a^t$ is chosen adversarially (subject to the restrictions of the dying setting), while in  \cite{kanade2009sleeping} and \cite{neu2014online} the focus is on the (full) sleeping experts setting with adversarial losses but \emph{stochastic} generation of $E_a^t$. 

For the case of adversarial selection of available actions (which is more relevant to the present paper), \cite{kleinberg2010regret} studies the problem in the cases of stochastic and adversarial rewards with both full information and bandit feedback. Among the four settings, the adversarial full-information setting is most related to our work. They prove a lower bound of $\Omega(\sqrt{TK\log K})$ in this case and a matching upper bound by creating $K!$ experts and running Hedge on them, which, as mentioned before, requires computation of order $\bigO(K^K)$ per round. They prove an upper bound of $\bigO(K\sqrt{T \log K})$ which is optimal within a log factor for the bandit setting using a similar transformation of experts. A similar framework in the bandits setting introduced in \cite{chakrabarti2009mortal} is called ``mortal bandits''; we do not discuss this work further as the results are not applicable to our case, given that they do not consider adversarial rewards. There is also another line of work which considers the contrary direction of the dying experts game. The setting is usually referred to as ``branching'' experts, in which the set of experts can only expand. In particular, part of the inspiration for our algorithms came from \cite{gofer2013regret,mourtada2017efficient}.

The hardness of the sleeping experts setting is well-studied (\cite{kanade2014learning, kale2016hardness,kleinberg2010regret}). First, \cite{kleinberg2010regret} showed for a restricted class of algorithms that there is no efficient no-regret algorithm for sleeping experts setting unless $RP = NP$. Following this, \cite{kanade2014learning} proved that the existence of a no-regret efficient algorithm for the sleeping experts setting implies the existence of an efficient algorithm for the problem of PAC learning DNFs, a long-standing open problem. For the similar but more general case of online sleeping combinatorial optimization (OSCO) problems, \cite{kale2016hardness} showed that an efficient and optimal algorithm for ``per-action'' regret in OSCO problems implies the existence of an efficient algorithm for PAC learning DNFs. Per-action regret is another natural benchmark for partial availability of actions for which the regret with respect to an action is only considered in rounds in which that action was available.

\section{Number of effective experts in dying experts setting} \label{sec:number-of-effective}

In this section, we consider the number of effective permutation experts among the set of all possible orderings of initial experts. The idea behind this is that, given the structure in dying experts, not all the orderings will behave uniquely in hindsight. Formally, the \emph{behavior} of $\pi$ is a sequence of predictions $(\sigma^1(\pi), \sigma^2(\pi), \dots, \sigma^T(\pi))$. This means that the behaviors of two permutation experts $\pi$ and $\pi'$ are the same if they use the same initial experts in \emph{every} round. We define the set of effective orderings $\cE \subseteq \Pi$ to be a set such that, for each unique behavior of orderings, there only exists one ordering in $\cE$.

To clarify the definition of unique behavior, suppose initial expert $e_1$ is always awake. Then two orderings $\pi_1 = (e_1,e_2,\dots)$ and $\pi_2 = (e_1,e_3,\dots)$ will behave the same over all the rounds, making one of them redundant. Let us clarify that behavior is not defined based on losses, e.g., if $\pi_1 = (e_i,\dots)$ and $\pi_2 = (e_j,\dots )$ where $i\neq j$ both suffer identical losses over all the rounds (i.e.~their performances are equal) while using different original experts, then they are not considered redundant and hence both of them are said to be \emph{effective}. 

Let $d_i$ be the number of experts dying on the $i\nth$ night. Denote by $A$ the number of experts that will always be awake, so that $A = K - \sum_{i=1}^{m} d_i $. We are now ready to find the cardinality of set $\cE$.

\begin{theorem}%[Number of Effective Experts]
  \label{thm:number-of-effective-experts}
  In the dying experts setting, for $K$ initial experts and $m$ nights, the number of effective orderings in $\Pi$ is
  $f(\{ d_1, d_2, \dots d_m \}, A) = A \cdot \prod_{s=1}^{m} (d_s+1)$.
\end{theorem}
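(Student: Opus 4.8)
The plan is to count the distinct behaviors $(\sigma^1(\pi), \dots, \sigma^T(\pi))$ directly, by observing that a behavior is fully determined by a much smaller amount of information than the full ordering $\pi$. Since experts only die (and at fixed nights), the prediction $\sigma^t(\pi)$ changes only at nights, and only when the currently-predicted expert is among those that die. So I would first argue that a behavior is completely determined by the sequence of ``active choices'' the ordering makes: which expert it predicts on day $0$ (before any night), and then, for each night $s = 1, \dots, m$ on which the current prediction dies, which expert it switches to. Conversely, the prediction on a given day is the first still-alive expert in $\pi$, so the behavior is constant on each day and the whole behavior sequence is recovered from these at-most-$(m+1)$ choices.

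Next I would set up the count as a product over the $m$ nights plus the initial day. The cleanest way is to think of building an effective ordering greedily: place experts into the ordering in the order in which they would ever be ``reached'' as first-alive experts. The head of the ordering is whatever expert is predicted on day $0$; this expert is alive on day $0$, and the only experts that matter for later predictions are ones alive at least that long. More carefully, I would partition experts by their death night: $d_s$ experts die on night $s$, and $A$ experts never die. For the prediction to be well-defined and to produce a genuinely new behavior at night $s$, I need to know: among the experts still alive after night $s$, which one is the new first-alive expert — but this only produces a \emph{new} behavior if the previous first-alive expert actually died on night $s$. The key combinatorial identity is that the number of relevant orderings factors as: $A$ choices for ``the permanent expert that is eventually predicted'' times, for each night $s$, $(d_s + 1)$ choices corresponding to ``which of the $d_s$ experts dying that night is predicted on day $s$'', with the ``$+1$'' accounting for the case that no expert dying on night $s$ is ever predicted (the prediction skips over night $s$ without changing).

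To make this rigorous I would define, for an effective ordering, a canonical tuple $(a; b_1, \dots, b_m)$ where $a \in \{1,\dots,A\}$ indexes the always-awake expert that sits at the front among always-awake experts in $\pi$ (equivalently, the prediction on the final day), and $b_s \in \{0, 1, \dots, d_s\}$ records either which expert dying on night $s$ is predicted during day $s$ (if the day-$s$ prediction is one of those $d_s$ experts) or $0$ (if it is not). I would then prove this map is a bijection between $\cE$ and $\{1,\dots,A\} \times \prod_{s=1}^m \{0,\dots,d_s\}$: injectivity because the tuple determines the behavior (walk forward day by day, reading off the prediction), and surjectivity by explicitly constructing, from any tuple, an ordering realizing it — e.g.\ put the named dying experts in reverse order of their death nights, then the chosen always-awake expert, then all remaining experts arbitrarily. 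Counting the target set gives $A \cdot \prod_{s=1}^m (d_s + 1) = f(\{d_1,\dots,d_m\}, A)$.

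The main obstacle is getting the bookkeeping of the bijection exactly right — in particular handling the ``$0$'' case cleanly (the prediction carries over an entire night unchanged because no predicted expert died), and making sure that the canonical ordering I construct from a tuple really does yield the intended first-alive expert on every day without accidentally promoting some other expert earlier than intended. Ordering the named dying experts by \emph{decreasing} death night (so the one dying earliest sits furthest back among them, and is the first reached) is what makes the construction work, and I would need to verify that remaining (un-named) experts, inserted at the tail, never interfere. Everything else — that behavior is day-wise constant, that only nights where the current prediction dies can change the behavior — is routine.
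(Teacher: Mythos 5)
Your route is genuinely different from the paper's. The paper proves the formula by induction on the number of nights, establishing the recursion $\cE_i = (\bigcup_{e_s} ((e_s) + \cE_{i-1})) \cup \cE_{i-1}$ over the experts $e_s$ dying on the first night, which yields $f(\{d_1,\ldots,d_m\},A) = (d_1+1)\, f(\{d_2,\ldots,d_m\},A)$. You instead exhibit a direct bijection between behaviors and the product set $\{1,\ldots,A\}\times\prod_{s=1}^m\{0,\ldots,d_s\}$. Your bijection is essentially the unrolled form of the paper's recursion, but stating it directly is arguably cleaner, and your argument that the tuple determines the behavior is sound: reading backwards from the last day, the day-$s$ prediction either dies on night $s$ (in which case $b_s$ names it) or survives that night, in which case it must equal the day-$(s+1)$ prediction.

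However, your explicit construction for surjectivity is backwards. You place the named dying experts ``in reverse order of their death nights,'' i.e., the latest-dying named expert at the front; but then on day $1$ the first alive expert is that latest-dying one rather than the earliest-dying one, so the realized behavior fails to match the tuple whenever more than one night is named. Your parenthetical ``so the one dying earliest sits furthest back among them, and is the first reached'' is self-contradictory: the first alive expert is read from the \emph{front} of the ordering, not the back. The correct canonical ordering is $(x_{s_1}, x_{s_2}, \ldots, x_{s_k}, a, \text{rest})$, where $s_1 < s_2 < \cdots < s_k$ are the named nights in \emph{increasing} order of death: then on days $1,\ldots,s_1$ the first alive expert is $x_{s_1}$, on days $s_1+1,\ldots,s_2$ it is $x_{s_2}$, and after night $s_k$ it is $a$, exactly as the tuple prescribes; the trailing un-named experts never interfere because they sit behind $a$, which never dies. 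With that one-line fix the bijection, and hence the count $A\cdot\prod_{s=1}^m(d_s+1)$, goes through.
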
 

In the special case where no expert dies ($m = 0$), we use the convention that the (empty) product evaluates to 1 and hence $f(\{  \}, A) = A$.
We mainly care about $|\cE|$ as we use it to derive our upper bounds; hence, we should find the maximum value of $f$. We can consider the maximum value of $f$ in three regimes.
\begin{enumerate}
    \item In the case of a fixed number of nights $m$ and fixed $A$, the function $f$ is maximized by equally spreading the dying experts across the nights. As the number of dying experts might not be divisible by the number of nights, some of the nights will get one more expert than the others. Formally, the maximum value is
    $(\left\lceil \frac{D}{m} \right\rceil^{D\bmod{m}} \cdot \left\lfloor \frac{D}{m} \right\rfloor^{m - (D \bmod{m})} \cdot A )$, where $D = K - A + m$ and $K - A \le m$.
    
    \item In the case of a fixed number of dying experts (fixed $A$), the maximum value of $f$ is $(2^{K - A}\cdot A)$ which occurs when one expert dies on each night. The following is a brief explanation on how to get this result. 
      Denote by $B = (d_1, d_2, \dots, d_b)$ a sequence of numbers of dying experts where more than one expert dies on some night and $B$ maximizes $f$ (for fixed $A$), so that $F = f\left(\{d_1, d_2, \dots, d_b \}, A\right)$. Without loss of generality, assume that $d_1 > 1$. Split the first night into $d_1$ days where one expert dies at the end of each day (and consequently each of those days becomes a night). Now $F' = f\left(\{1, 1, \dots ,1 , d_2, \dots, d_b \}, A\right)$ where 1 is repeated $d_1$ times. If $d_1 > 1$ then $F' = F\cdot2^{d_1}/(d_1 + 1) > F$. We see that by splitting the nights we can achieve a larger effective set.

    \item In the case of a fixed number of nights $m$, similar to the previous cases, the maximum value is obtained when each night has equal impact on the value of $f$, i.e., when $A = d_1 + 1 = d_2 + 1 = \dots = d_m + 1$; however, it might not be possible to distribute the experts in a way to get this, in which case we should make the allocation $\{ A, d_1 + 1, d_2 + 1, \dots, d_m + 1 \}$ as uniform as possible. 
\end{enumerate}
By looking at cases 2 and 3, we see that by increasing $m$ and the number of dying experts, we can increase $f$; thus, the maximum value of $f$ with no restriction is $2^{K-1}$ and is achieved when $m = K-1$ and $A = 1$.

\section{Regret bounds for known and unknown order of dying} \label{sec:bounds}

In this section, we provide lower and upper bounds for the cases of unknown and known order of dying. In order to prove the lower bounds, we need a non-asymptotic minimax lower bound for the DTOL framework, i.e., one which holds for a finite number of experts $K$ and finite $T$. 
During the preparation of the final version of this work, we were made aware of a result of Orabona and P\'al (see Theorem 8 of \cite{orabona2015optimal}) that does give such a bound. 
However, for completeness, we present a different fully non-asymptotic result that we independently developed; this result is stated in a simpler form and admits a short proof (though we admit that it builds upon heavy machinery).
We then will prove matching upper bounds for both cases of unknown and known order of dying. 

\subsection{Fully non-asymptotic minimax lower bound for DTOL}
We analyze lower bounds on the minimax regret in the DTOL game with $K$ experts and $T$ rounds. We assume that all losses are in the interval $[0,1]$. Let $\Delta_K := \Delta([K])$ denote the simplex over $K$ outcomes. 
The minimax regret is defined as
\begin{align}
  \inf_{\bm{p}_1 \in \Delta_K} \sup_{\bm{\loss}_1 \in [0,1]^K} 
  \ldots 
  \inf_{\bm{p}_T \in \Delta_K} \sup_{\bm{\loss}_T \in [0,1]^K} 
  \left\{
  \sum_{t=1}^T \bm{p}_t \cdot \bm{\loss}_t - \min_{j \in [K]} \sum_{t=1}^T \loss_{j,t} 
  \right\} . \label{eqn:minimax-regret} 
\end{align}

\begin{theorem}%[DTOL Minimax lower bound]
  \label{thm:dtol-lower-bound}
  For a universal constant $L$, the minimax regret \eqref{eqn:minimax-regret} is lower bounded by
  \begin{align*}
    \frac{1}{L} \min \left\{ \sqrt{(T/2) \log K}, T \right\} .
  \end{align*}
\end{theorem}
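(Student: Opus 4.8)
The plan is to reduce the minimax regret of the full DTOL game to a suitable stochastic lower bound by having Nature play i.i.d.\ random losses, and then invoke known non-asymptotic lower bounds for the expected supremum of a random walk / the expected regret against the best of $K$ i.i.d.\ sequences. First I would restrict Nature to a simple stochastic strategy: on each round $t$, independently for each expert $j$, draw $\loss_{j,t} \in \{0,1\}$ as a fair coin, or more generally from a two-point distribution $\{1/2 - \varepsilon, 1/2 + \varepsilon\}$ with a bias $\varepsilon$ to be tuned. Since the minimax value is at least the value achieved against any fixed (oblivious) randomized adversary, and against i.i.d.\ losses Learner's best response earns expected cumulative loss at least $T/2 \cdot (\text{something})$ while $\min_j \sum_t \loss_{j,t}$ dips below its mean by a predictable amount, the minimax regret is bounded below by $\E\bigl[ T \mu - \min_{j\in[K]} \sum_{t=1}^T \loss_{j,t}\bigr]$ for the appropriate mean $\mu$. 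This is the standard ``Nature randomizes'' step and it is routine.

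Next I would lower bound $\E\bigl[ \max_{j \in [K]} \bigl( T\mu - \sum_{t} \loss_{j,t} \bigr)\bigr]$, i.e.\ the expected maximum downward deviation over $K$ independent sums of $T$ bounded i.i.d.\ variables. The clean way is to appeal to an anti-concentration / Gaussian-comparison result: for $K$ independent centered random walks of length $T$ with per-step variance $\Theta(1)$, the expectation of the maximum is $\Omega(\sqrt{T \log K})$ whenever $T \gtrsim \log K$, and $\Omega(T)$ when $T \lesssim \log K$ (in the latter regime even a single walk with constant probability stays at one extreme for all $T$ rounds, giving deviation $\Omega(T)$). The factor $1/2$ inside $\sqrt{(T/2)\log K}$ and the universal constant $L$ absorb the constants lost in this comparison. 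Concretely, I would either (i) use a Berry--Esseen bound to compare each $\sum_t \loss_{j,t}$ to a Gaussian $\mathcal{N}(T\mu, T\sigma^2)$ up to error $O(1/\sqrt{T})$, then use the known exact asymptotics/bounds for $\E[\max$ of $K$ i.i.d.\ Gaussians$]$, controlling the Berry--Esseen slack by noting it only perturbs the relevant probabilities by a constant factor; or (ii) cite directly the non-asymptotic minimax lower bound of Orabona--P\'al (Theorem 8 of \cite{orabona2015optimal}) or an equivalent statement, since the excerpt explicitly allows building on ``heavy machinery.''

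The main obstacle is making the two-regime behavior come out with a single clean bound $\tfrac{1}{L}\min\{\sqrt{(T/2)\log K},\,T\}$ rather than a bound with $\log K$-dependent caveats. The delicate point is the crossover regime $T \asymp \log K$: when $T$ is small compared to $\log K$, the Gaussian approximation for a single walk is poor, and one must instead argue directly via the binomial tail that $\Pr[\sum_t \loss_{j,t} = 0] = 2^{-T}$ (for fair coins), so that with $K \ge 2^T$ experts at least one expert has loss $0$ with constant probability, yielding regret $\Omega(T)$; gluing this ``small $T$'' argument to the ``large $T$'' Gaussian argument so that the constant $L$ is genuinely universal (independent of $K$ and $T$) requires a careful case split at, say, $\sqrt{(T/2)\log K} \le T \iff \log K \le 2T$. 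I would handle this by proving the two cases separately with their own constants and then taking $L$ to be the larger of the two, and by being slightly generous with the bias $\varepsilon$ (choosing it as a small universal constant rather than optimizing) so that no $\varepsilon$-dependence leaks into $L$.
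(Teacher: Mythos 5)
Your reduction is identical to the paper's: restrict Nature to i.i.d.\ fair-coin losses, note that Learner's expected loss is then $T/2$ regardless of its strategy, and rewrite the resulting lower bound as $\tfrac{1}{2}\E\bigl[\max_{j\in[K]}\sum_{t}\varepsilon_{j,t}\bigr]$ for independent Rademacher variables. The divergence is entirely in how this expected maximum is bounded below. The paper views $\bigl\{\sum_t\varepsilon_{j,t}\bigr\}_{j\in[K]}$ as a Bernoulli process indexed by $K$ indicator matrices with pairwise $\ell_2$-separation $\sqrt{2T}$ and $\ell_\infty$-norm $1$, and invokes Talagrand's Sudakov minoration for Bernoulli processes, which delivers $\tfrac{1}{L}\min\{a\sqrt{\log K},\,a^2/b\}$ in one stroke; both regimes of the $\min$ fall out of a single lemma with no case split and no Gaussian approximation.

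The gap is in your route (i). Berry--Esseen controls tail probabilities only up to an \emph{additive} error of order $1/\sqrt{T}$, but to conclude $\E\bigl[\max_j\sum_t\varepsilon_{j,t}\bigr]\gtrsim\sqrt{T\log K}$ you need a \emph{lower} bound of order $1/K$ on $\Pr\bigl[\sum_t\varepsilon_{j,t}\ge c\sqrt{T\log K}\bigr]$. In the regime your case split assigns to the Gaussian argument ($\log K\le 2T$), $K$ can be as large as $e^{2T}$, so the target tail probability is exponentially small in $T$ and is completely swallowed by the $O(1/\sqrt{T})$ Berry--Esseen slack; indeed the comparison already breaks once $K$ grows faster than polynomially in $T$, which is a common and important regime. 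What you actually need there is a \emph{multiplicative} reverse-Chernoff lower bound on the binomial upper tail, valid for deviations up to order $T$ (provable by Stirling-type estimates or Feller's moderate-deviation bounds); with that substituted for Berry--Esseen, your gluing argument and your endpoint case $K\ge 2^T$ (where a zero-loss expert exists with constant probability, giving regret $\Omega(T)$) do combine to give the stated bound with a universal constant. Route (ii), citing Orabona--P\'al's Theorem 8, is legitimate but outsources the theorem rather than proving it; the paper explicitly positions its Talagrand-based argument as an independently developed alternative to exactly that citation.
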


The proof (in the appendix) begins similarly to the proof of the often-cited Theorem 3.7 of \cite{cesa2006prediction}, but it departs at the stage of lower bounding the Rademacher sum; we accomplish this lower bound by invoking Talagrand's Sudakov minoration for Bernoulli processes (\cite{talagrand1993regularity,talagrand2005generic}).

\subsection{Unknown order of dying}\label{sec:unknown}
For the case where Learner is not aware of the order in which the experts die, we prove a lower bound of $\Omega(\sqrt{mT\log K})$. Given that we have $E^{t+1}_a \subseteq E_a^t$, the construction for the lower bound of \cite{kleinberg2010regret} cannot be applied to our case. In other words, our adversary is much weaker than the one in \cite{kleinberg2010regret}, but, surprisingly, we show that the previous lower bound still holds (by setting $m = K$) even with the weaker adversary. We then analyze a simple strategy to achieve a matching upper bound.

In this section, we further assume that $\sqrt{T/2\log K} < T$ for every $T$ and $K$ so that there is hope to achieve regret that is sublinear with respect to $T$. We now present our lower bound on the regret for the case of unknown order of dying.
\begin{theorem}%[Unknown Order Lower Bound]
  \label{thm:unknown-lower}
  When the order of dying is unknown, the minimax regret is $\Omega(\sqrt{mT\log K})$.
\end{theorem}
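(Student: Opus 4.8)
The plan is to reduce to the fully non-asymptotic DTOL lower bound (Theorem~\ref{thm:dtol-lower-bound}) by exhibiting an adversary in the dying experts setting that forces regret $\Omega(\sqrt{mT\log K})$. The key structural idea is to partition the $T$ rounds into $m$ blocks of length $T/m$ each, and to align these blocks with the $m$ nights: in block $i$, some fixed number of experts die at its end, so that during block $i$ there are still plenty of experts alive, but the ordering comparator is forced to ``commit'' across blocks in a way that mimics a fresh DTOL game in each block.

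More concretely, I would set things up so that roughly $K/m$ experts are alive and ``fresh'' in each block (this is where the assumption that the adversary is weak — only shrinking availability — bites, and where I must be careful). Within block $i$, run the hard DTOL instance from Theorem~\ref{thm:dtol-lower-bound} on the experts that are alive in that block but not yet committed; by that theorem the per-block regret against the best single expert in the block is $\Omega(\sqrt{(T/m)\log(K/m)})$, or more simply $\Omega(\sqrt{(T/m)\log K})$ once we argue the logarithmic factor is preserved (taking $K/m \geq \sqrt{K}$, say, or handling the regime $m$ close to $K$ separately via $\log K$ versus $\log(K/m)$ bookkeeping). Crucially, because experts die between blocks and never return, the best ordering $\pi$ can select, for each block $i$, the best expert of that block as $\sigma^t(\pi)$ throughout block $i$: one simply places the block-$i$ optimal expert ahead of all experts that die after block $i$ but behind all experts that already died. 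Hence the ranking-regret comparator's loss is at most $\sum_{i=1}^m \min_{j \text{ alive in block } i} (\text{loss of } j \text{ in block } i)$, so Learner's regret is at least the sum of the $m$ independent per-block DTOL regrets, i.e.\ $\Omega(m \cdot \sqrt{(T/m)\log K}) = \Omega(\sqrt{mT\log K})$.

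The remaining technical points are: (i) verifying that the per-block hard instances can be made genuinely independent so the regrets add — this follows because the adversary in each block only uses experts not used (not yet ``retired'') in earlier blocks, and the losses in round $t$ only depend on the current block; (ii) handling integrality of $T/m$ and $K/m$ (absorb into constants, or assume $m \mid T$ and $m \mid K$ and note the general case follows by a standard rounding argument that loses only constants); and (iii) making sure $m \leq K$ so there are enough experts — this is exactly the constraint in the dying setting, and the worst case $m = K$ recovers the $\Omega(\sqrt{KT\log K})$ bound matching \cite{kleinberg2010regret}. The main obstacle I anticipate is item (i) together with the logarithmic factor: I need each block's sub-game to retain a $\log K$ (not merely $\log(K/m)$) factor, which requires choosing the block sizes and the number of ``fresh'' experts per block carefully — e.g.\ keeping a pool of $\Theta(\sqrt{K})$ or more candidate experts alive in every block so that $\log$ of the pool size is $\Theta(\log K)$, while still arranging that $m$ nights occur. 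A clean way to finesse this is to reserve $\lceil \sqrt{K}\rceil$ experts that stay alive the whole game to host the hard instance in each block, and let the remaining $K - \lceil\sqrt{K}\rceil$ experts be the ones that die (one or more per night) purely to create the $m$ nights; then each block's DTOL lower bound is $\Omega(\sqrt{(T/m)\log\lceil\sqrt{K}\rceil}) = \Omega(\sqrt{(T/m)\log K})$, and summing over the $m$ blocks yields the claim. I would present the argument at this level of detail, deferring the routine rounding and constant-tracking to the appendix.
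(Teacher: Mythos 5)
Your high-level plan (decompose $T$ into blocks aligned with the nights, lower bound each block via the DTOL bound of Theorem~\ref{thm:dtol-lower-bound}, and sum) matches the paper's, but the specific construction you propose to preserve the $\log K$ factor breaks the comparator argument. The problem is the claim that the best ordering ``can select, for each block $i$, the best expert of that block as $\sigma^t(\pi)$ throughout block $i$.'' Recall that $\sigma^t(\pi)$ is the \emph{first alive} expert of a single fixed ordering $\pi$. If, as in your finesse, the hard instance in every block is hosted by a reserved pool of $\lceil \sqrt{K}\rceil$ experts that never die, then once some pool expert becomes the first alive expert of $\pi$ it remains so for the rest of the game: for $\pi$ to switch from pool expert $e_a$ (played in block $i$) to a different pool expert $e_b$ in block $i+1$, expert $e_a$ would have to die, which it never does. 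Hence a single ordering can collect the block-wise optimum from at most one block, the comparator's loss is essentially that of one fixed expert over all $T$ rounds, and your construction forces only $\Omega(\sqrt{T\log K})$ regret, not $\Omega(\sqrt{mT\log K})$. The non-finessed version of your argument (disjoint fresh sets of $K/m$ experts per block, all dying at the block's end) does make the comparator decomposition valid, but each block then contributes only $\sqrt{(T/m)\log(K/m)}$, and the $\log K$ factor is genuinely lost when $m$ is close to $K$ --- which is exactly the regime needed to recover the $\Omega(\sqrt{TK\log K})$ bound of \cite{kleinberg2010regret}.

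The paper resolves this tension by making the adversary \emph{adaptive in which expert dies}, which is permitted precisely because the order of dying is unknown to Learner. Each day, all currently alive experts (there are $K-s+1$ of them on day $s$, so the $\log K$ factor survives) receive i.i.d.\ Bernoulli$(1/2)$ losses during the first half of the day; at the half-day mark the adversary identifies the empirically best expert, gives it zero loss on the second half, gives every other expert the complement of its first-half losses, and then kills the day's winner at the day's end. The complement trick forces every surviving expert to end the day with identical cumulative loss, so the next day is a decoupled fresh game, and killing the winner is exactly what allows the single comparator ordering $(\text{winner of day }1, \text{winner of day }2, \ldots)$ to collect every day's optimum. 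Your proposal is missing this idea --- the identity of the dying expert must be chosen \emph{after} seeing the losses and must coincide with the block winner --- and without it the construction cannot deliver both the $\sqrt{m}$ and the $\sqrt{\log K}$ factors simultaneously.
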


\begin{proof}[Proof Sketch]
  We construct a scenario where each day is a game decoupled from the previous ones. This means that the algorithm will be forced to have no prior information about the experts at the beginning of each day. First, partition the $T$ rounds into $m+1$ days of equal length. The days are split into two halves. On the first half, each expert suffers loss drawn i.i.d. from a Bernoulli distribution with $p= 1/2$. At the end of the first half of the day, we choose the expert with the lowest cumulative loss until that round, and that expert will suffer no loss on the second half. For any other expert $e_i$, we use the loss $\ell^{(1)}_{i,t}$ of $e_i$ on the $t\nth$ round of the first half to define the loss $\ell^{(2)}_{i,t}$ of $e_i$ on the $t\nth$ round of the second half; specifically, we choose the setting $\ell^{(2)}_{i,t} := 1 - \ell^{(1)}_{i,t}$. We show that the ranking regret of the set of orderings over $T$ rounds is obtained by summing the classical regrets of each day over the set of days.
\end{proof}

A natural strategy in the case of unknown dying order is to run Hedge over the set of initial experts $E$ and, after each night, reset the algorithm. We will refer to this strategy as ``Resetting-Hedge''. Theorem~\ref{thm:unknown-upper} gives an upper bound on regret of Resetting-Hedge.

\begin{theorem}%[Unknown Order Upper Bound]
  \label{thm:unknown-upper}
  Resetting-Hedge enjoys a regret of $R_\Pi(1,T) = \bigO(\sqrt{mT\log K})$.
\end{theorem}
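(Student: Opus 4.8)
The plan is to reduce the ranking regret to a sum of ordinary (classical) DTOL regrets, one per day, and then bound each summand by the standard Hedge guarantee. First I would set up the day decomposition: write $[T] = \bigcup_{j} I_j$ as the union of the at most $m+1$ maximal blocks of consecutive rounds on which the available set is constant, and let $E_a^{(j)}$ denote the common value of $E_a^t$ for $t \in I_j$. The key structural observation is that within a single day the first-alive expert $\sigma^t(\pi)$ of any ordering $\pi$ does not change, so the loss incurred by $\pi$ over day $j$ equals $\sum_{t \in I_j}\ell_{i,t}$ for the single expert $i = \sigma^{(j)}(\pi) \in E_a^{(j)}$; conversely, every surviving expert is the first-alive expert of some ordering. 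Hence
\begin{align*}
  \min_{\pi\in\Pi}\sum_{t\in I_j}\ell_{\sigma^t(\pi),t} \;=\; \min_{i\in E_a^{(j)}}\sum_{t\in I_j}\ell_{i,t}.
\end{align*}

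Next I would use that the minimum of a sum is at least the sum of the minima, namely $\min_{\pi}\sum_{t=1}^T \ell_{\sigma^t(\pi),t} \ge \sum_{j}\min_{\pi}\sum_{t\in I_j}\ell_{\sigma^t(\pi),t}$, and combine it with the display above and \eqref{eq:ranking-regret-definition} to obtain
\begin{align*}
  R_\Pi(1,T) \;\le\; \sum_{j=1}^{m+1}\Bigl(\sum_{t\in I_j}\hat\ell_t - \min_{i\in E_a^{(j)}}\sum_{t\in I_j}\ell_{i,t}\Bigr) \;=\; \sum_{j=1}^{m+1} R_{E_a^{(j)}}(I_j).
\end{align*}
Because Resetting-Hedge starts a fresh copy of Hedge on $E_a^{(j)}$ at the beginning of day $j$, the $j$-th summand is precisely the classical regret of Hedge run on $\lvert E_a^{(j)}\rvert \le K$ experts for $\lvert I_j\rvert$ rounds.

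Then I would invoke the Hedge regret bound. Since in the unknown-order case the length $\lvert I_j\rvert$ of the current day is not known in advance (Learner cannot anticipate the next night), I would run each restarted copy of Hedge with a horizon-free, time-varying learning rate $\eta_\tau \propto \sqrt{(\log K)/\tau}$, where $\tau$ counts rounds since the last reset; this yields $R_{E_a^{(j)}}(I_j) = \bigO(\sqrt{\lvert I_j\rvert \log K})$ with no knowledge of $\lvert I_j\rvert$, and it is exactly this adaptivity that the discussion in Section~\ref{sec:extend-alg} makes rigorous. Finally, applying Cauchy--Schwarz (equivalently concavity of $\sqrt{\cdot}$) gives $\sum_{j=1}^{m+1}\sqrt{\lvert I_j\rvert} \le \sqrt{(m+1)\sum_j \lvert I_j\rvert} = \sqrt{(m+1)T}$, so $R_\Pi(1,T) = \bigO(\sqrt{(m+1)T\log K}) = \bigO(\sqrt{mT\log K})$ for $m \ge 1$ (and it reduces to the plain Hedge bound when $m = 0$).

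All the steps are elementary once the decomposition is established; the part I would be most careful about is the first paragraph's claim that the per-day ranking-regret comparator collapses to the best surviving expert of that day, and the choice of a genuinely anytime learning-rate schedule, so that no doubling-trick or union-bound overhead creeps in when the $\lvert I_j\rvert$ are unknown and highly unequal. I would expect this to be the main (if modest) obstacle.
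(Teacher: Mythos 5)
Your proposal is correct and follows essentially the same route as the paper's own (much terser) proof: decompose the horizon into the $m+1$ days, upper bound the ranking regret by the sum of per-day classical regrets against the best surviving expert, apply the Hedge guarantee on each day, and finish with Cauchy--Schwarz via Lemma~\ref{lemma:cauchy-sumofsqrt}. Your added care about using an anytime learning rate within each day (since $\lvert I_j\rvert$ is unknown when the order of dying is unknown) is a legitimate detail that the paper's proof glosses over, but it does not change the argument's structure.
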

\begin{proof}
  Let $\tau_s$ be the set of round indices of day $s$; hence, we have $\sum_{s=1}^{m+1} |\tau_s| = T$. The overall ranking regret can be upper bounded by the sum of classical regrets for every interval. Hence, the analysis is as follows:
  \begin{align}
    R_\Pi(1,T) \leq \sum_{s=1}^{m+1} \sqrt{|\tau_s| \log(K-s)} \leq \sqrt{\log K} \sum_{s=1}^{m+1} \sqrt{|\tau_s|} \leq \sqrt{(m+1) T \log K} ;
  \end{align}
  the last inequality is essentially from the Cauchy-Schwarz inequality (see Lemma~\ref{lemma:cauchy-sumofsqrt}).
\end{proof}

Although the basic Resetting-Hedge strategy adapts to $m$, it has many downsides. For example, resetting can be wasteful in practice. Another natural strategy, simply running Hedge on the set of all $K!$ permutation experts, is non-adaptive (obtaining regret $\bigO(\sqrt{T K \log K})$ and computationally inefficient if implemented na\"ively). 
However, as we show in Section~\ref{sec:algorithm-unknown}, this algorithm can be implemented efficiently (with runtime linear in $K$ rather than $K!$) and also, as we show in Section~\ref{sec:extend-alg}, by running Hedge on top of several copies of Hedge (one per specially chosen learning rate), we can obtain a guarantee that is far better than Theorem~\ref{thm:unknown-upper}. Moreover, our efficient implementation of Hedge can be extended to adaptive algorithms like AdaHedge and FlipFlop (\cite{derooij2014follow}).

\subsection{Known order of dying}
A natural question is whether Learner can leverage information about the order of experts that are going to die to achieve a better regret. We show that the answer is positive: the bound can be improved by a logarithmic factor. We also give a matching lower bound for this case (so both bounds are tight).

Similar to the unknown setting, we provide a construction to prove a lower bound on the ranking regret in this case. We still assume that $\sqrt{T/2\log K} < T$. 
\begin{theorem}%[Known Order Lower Bound]
  \label{thm:known-lower}
    When Learner knows the order of dying, the minimax regret is $\Omega(\sqrt{mT})$.
\end{theorem}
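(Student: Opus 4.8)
The plan is to mimic the decoupling construction behind Theorem~\ref{thm:unknown-lower}, but to arrange each decoupled sub-game so that it behaves like a \emph{two-expert} DTOL game rather than a $(K-s)$-expert one. Losing the $\sqrt{\log K}$ factor is exactly what distinguishes the known-order case, and it becomes possible precisely because Learner already knows who dies next, so the adversary can no longer hide the good expert's identity among many. Concretely, take $K=m+1$ experts $e_1,\dots,e_{m+1}$ with $e_s$ dying on night $s$ (so on day $s$ the alive experts are $e_s,e_{s+1},\dots,e_{m+1}$), partition the $T$ rounds into $m+1$ equal days $\tau_1,\dots,\tau_{m+1}$ of length $\ell=T/(m+1)$, and if $K>m+1$ add extra always-alive experts that simply copy the ``block'' behaviour described below. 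This uses the standing assumptions $m\le K-1$ and $m\le T$.

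On each day $s\le m$ I would run a two-expert game between $e_s$ and the \emph{block} $\{e_{s+1},\dots,e_{m+1}\}$: draw an independent fair bit $b_s$, run an independent copy of the $K=2$ adversary achieving Theorem~\ref{thm:dtol-lower-bound} over $\ell$ rounds, feed its two loss coordinates to $e_s$ and simultaneously to \emph{all} block experts, and swap the two coordinates iff $b_s=1$. Because all block experts suffer identical losses on day $s$, Learner's loss depends only on the total mass it places on $e_s$ versus on the block, so from Learner's side the day-$s$ game is genuinely a two-expert game; and the $b_s$-swap makes the distribution of the day-$s$ loss vectors invariant under exchanging $e_s$ with the block, so Learner can extract no information distinguishing them. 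Hence Learner's expected total loss over the whole game equals that of the product of the $m$ two-expert games (plus the contribution of the trivial day $m+1$, where only one expert is alive, which may be set arbitrarily).

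The key structural step is to show the comparator also decouples: for \emph{every} loss realization and every $G\subseteq[m]$, a single ordering $\pi$ plays $e_s$ on day $s$ for all $s\in G$ and plays (a member of) the block on day $s$ for all $s\notin G$. Take $\pi=(e_{g_1},\dots,e_{g_k},e_{m+1},e_{h_1},\dots)$ where $g_1<\dots<g_k$ enumerate $G$ and the $h_i$ enumerate $[m]\setminus G$ in any order; one checks directly that $e_s$ precedes all of $e_{s+1},\dots,e_{m+1}$ in $\pi$ iff $s\in G$. Therefore $\min_{\pi}$ of the total comparator loss equals $\sum_{s\le m}\min\{\text{day-}s\text{ loss of }e_s,\ \text{day-}s\text{ loss of the block}\}$ plus the day-$(m+1)$ term, so the ranking regret of the full game is at least the sum of the classical regrets of the $m$ per-day two-expert games. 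Invoking Theorem~\ref{thm:dtol-lower-bound} with $K=2$ gives each of these $\Omega(\sqrt{\ell})=\Omega(\sqrt{T/(m+1)})$, and summing over the $m$ days yields $\Omega\!\bigl(m\sqrt{T/(m+1)}\bigr)=\Omega(\sqrt{mT})$ using $m+1\le 2m$.

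The main obstacle I expect is making the ``no information leaks across days'' claim airtight, so that the per-day minimax bound of Theorem~\ref{thm:dtol-lower-bound} can be applied verbatim: one must verify that the past days' losses of the already-dead experts $e_1,\dots,e_{s-1}$, together with Learner's own past plays, carry zero information about $b_s$ and about day $s$'s adversary randomness (this holds by independence and because those experts are dead by day $s$), and simultaneously that the $b_s$-symmetrization renders $e_s$ indistinguishable from the block at \emph{every} round within day $s$, not merely marginally. The comparator-decoupling lemma is the other point requiring care, but it is a short combinatorial check.
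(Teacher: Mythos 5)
Your proof is correct and follows essentially the same strategy as the paper's: decouple the horizon into per-day two-expert subgames (so that Theorem~\ref{thm:dtol-lower-bound} with $K=2$ applies and the $\sqrt{\log K}$ factor disappears), argue that the ranking regret equals the sum of the per-day classical regrets, and sum the resulting $\Omega(\sqrt{T/m})$ bounds. The only real difference is the embedding of the two-expert game: the paper kills \emph{two} designated experts per night and gives every other expert constant loss $1$ (so the per-day comparator is trivially the better of the two fresh experts and the best ordering is just the list of per-day winners), whereas you kill one expert per night and pit it against a block of identical experts, which obliges you to prove the small combinatorial lemma that every subset $G\subseteq[m]$ of ``play $e_s$ on day $s$'' choices is realized by some ordering --- a step the paper's construction sidesteps.
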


\begin{proof}[Proof Sketch]
Our construction involves first partitioning all the rounds to $m/2$ days of equal length. On day $s$, all experts will suffer loss 1 on all the rounds except for experts $e_{2s-1}$ and $e_{2s}$, who will suffer losses drawn i.i.d.~from a Bernoulli distribution with success probability $p = 1/2$. Experts $e_{2s-1}$ and $e_{2s}$ will die at the end of day $s$, and therefore, each ``day game'' effectively has 2 experts; our lower bound holds even when Learner knows this fact.
Furthermore, Learner will be aware that these two experts ($e_{2s-1}$ and $e_{2s}$) will die at the end of day $s$. Similar to the proof of Theorem~\ref{thm:unknown-lower}, the minimax regret is lower bounded by the sum of the minimax regrets over each day game.
\end{proof}

Although the proof is relatively simple, it is at least a little surprising that knowing such rich information as the order of dying only improves the regret by a logarithmic factor.

To achieve an optimal upper bound, using the results of Theorem~\ref{thm:number-of-effective-experts}, the strategy is to create $2^m(K-m)$ experts (those that are effective) and run Hedge on this set.
\begin{theorem}%[Known Order Upper Bound]
  \label{thm:known-upper}
  For the case of known order of dying, the strategy as described above achieves a regret of $\bigO \big(\sqrt{T (m + \log K)}\big)$.
\end{theorem}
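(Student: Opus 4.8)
The plan is to run Hedge over the effective orderings and then reduce everything to the textbook bound for prediction with expert advice. Concretely, treat each $\pi \in \cE$ as a single ``expert'' whose loss in round $t$ is $\ell_{\sigma^t(\pi),t} \in [0,1]$, and run Hedge on this set of $|\cE|$ experts with the uniform prior and learning rate $\eta = \sqrt{8\ln|\cE|/T}$; in round $t$ this induces a distribution over the currently available experts, since every $\pi \in \cE$ plays some $\sigma^t(\pi) \in E_a^t$, so the induced mixture is supported on $E_a^t$ and is a legitimate strategy. Because the order of dying is known, Learner can precompute $\cE$ at the outset (it is determined by $d_1,\dots,d_m$ and $A$) and, knowing which experts are alive in each round, can evaluate $\sigma^t(\pi)$ for every $\pi \in \cE$; thus this is genuinely an instance of DTOL with $|\cE|$ experts. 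The argument then splits into two parts: (i) competing with $\cE$ is as strong as competing with all of $\Pi$, and (ii) $\ln|\cE| = \bigO(m + \log K)$.

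For (i): by the definition of effectiveness, every $\pi \in \Pi$ has the same behavior $(\sigma^1(\pi),\dots,\sigma^T(\pi))$ as some member of $\cE \subseteq \Pi$, so the comparators coincide, $\min_{\pi \in \Pi}\sum_{t=1}^T \ell_{\sigma^t(\pi),t} = \min_{\pi \in \cE}\sum_{t=1}^T \ell_{\sigma^t(\pi),t}$. Moreover the loss $\hat\ell_t$ incurred by the strategy equals the loss of the Hedge mixture over $\cE$ under the per-round losses $\ell_{\sigma^t(\cdot),t}$. Hence $R_\Pi(1,T)$ is exactly the classical regret of Hedge-on-$\cE$ against the best element of $\cE$.

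For (ii): by Theorem~\ref{thm:number-of-effective-experts}, $|\cE| = A\prod_{s=1}^m(d_s+1)$; using $d_s+1 \le 2^{d_s}$ (valid since $d_s \ge 1$) together with $A = K - \sum_s d_s$ gives $|\cE| \le (K - \sum_s d_s)\,2^{\sum_s d_s}$, which is the bound $2^m(K-m)$ recorded before the theorem, so $\ln|\cE| \le m\ln 2 + \ln(K-m) = \bigO(m + \log K)$. Plugging this together with the standard Hedge guarantee (for $N$ experts with $[0,1]$ losses and $\eta = \sqrt{8\ln N/T}$ the regret is at most $\sqrt{(T/2)\ln N}$) into part (i) yields $R_\Pi(1,T) \le \sqrt{(T/2)\ln|\cE|} = \bigO\big(\sqrt{T(m + \log K)}\big)$, as claimed.

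I expect the only genuinely delicate point to be the cardinality control in (ii): one must turn the exact count $A\prod_s(d_s+1)$ into a bound whose logarithm is linear in $m$, rather than the $\bigO(m\log K)$ one would get from the crude $\prod_s(d_s+1)\le(K/m)^m$ — which is precisely what the elementary inequality $d_s+1\le 2^{d_s}$ (equivalently, the worst-case analysis following Theorem~\ref{thm:number-of-effective-experts}) supplies. The reduction in (i) is essentially definitional, and the appeal to the Hedge bound is routine; tuning $\eta$ uses knowledge of $T$, which can be removed by the usual anytime devices, and the per-round running time, na\"ively $|\cE|$, is reduced to $\bigO(K)$, as discussed in Section~\ref{sec:extend-alg} — neither of which affects the regret guarantee.
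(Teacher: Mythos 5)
Your proposal is correct and takes essentially the same route as the paper's (two-line) proof: run Hedge on the set of effective orderings $\cE$, observe that the ranking regret coincides with the classical regret over $\cE$, and bound $\ln\lvert\cE\rvert = \bigO(m+\log K)$ via Theorem~\ref{thm:number-of-effective-experts}. The only caveat is that your bound $(K-\sum_s d_s)\,2^{\sum_s d_s}$ equals the quoted $2^m(K-m)$ only under the paper's standing convention that one expert dies per night (so $\sum_s d_s = m$); in general the exponent is the total number of deaths rather than the number of nights, which is also how the paper's appendix interprets $m$ for this result.
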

\begin{proof}
  Hedge has regret of $\bigO(\sqrt{T\log K})$ for $K$ number of experts. Therefore, running Hedge on $2^m(K-m)$ experts yields the desired bound.
\end{proof}
Though the order of computation in the above strategy is better than $\bigO(K^K)$, it is still exponential in $K$. In the next section, we introduce algorithms that simulate these strategies but in a computationally efficient way.

%%%%%%%%%%%%%%%%%%%%%%%%%%%%%%%%%%%%%%%%%%%% 
\begin{figure}[t]
  \centering
  \begin{minipage}[t]{0.5\linewidth}
    \vspace{0pt}  
    \begin{algorithm}[H]
      \DontPrintSemicolon
      $\forall i \in [K]$ $c_{i, 1} := 1$, $h_{i, 1} := (K-1)! $\; 
      $E_a := \{ e_1, e_2, \dots e_K \}$ \;
      \For{t = 1, 2, \dots, T}{
        play \hspace{-1mm} $\mathbf{p}_ t \hspace{-0.5mm} =  \hspace{-0.5mm} \Bigl[ \hspace{-0.5mm} \ind{e_i \in E_a} \cdot \frac{h_{i,t} \cdot c_{i,t}}{\sum_{j=1}^{k} h_{j,t} \cdot c_{j,t}} \hspace{-0.5mm} \Bigr]_{i \in [K]}$\;
        receive $(\ell_{1,t},\dots,\ell_{K,t})$\;
        \For{ $e_i \in E_a$}{
          $c_{i,t+1} := c_{i,t} \cdot e^{-\eta \ell_{i, t}}$\;
          $h_{i,t+1} := h_{i, t}$
        }
        \If{expert j dies}{
          $E_a := E_a \setminus \{ e_j \}$ \;
          \For{  $e_i \in E_a$}{
            $h_{i, t+1} := h_{i, t+1} \cdot c_{i, t+1}$ \; $\hspace{0.5cm}+ (h_{j, t+1} \cdot c_{j, t+1})/\lvert E_a \lvert$ \;
            $c_{i, t+1} := 1$
            \vspace{0.175em}
          }
        }
      }
      \label{alg:hedge_perm}
      \caption{\protect Hedge-Perm-Unknown (HPU)}
    \end{algorithm}
  \end{minipage}%
  \begin{minipage}[t]{0.5\linewidth}
    \vspace{0pt}  
    \begin{algorithm}[H]
      \DontPrintSemicolon
      $\forall i \in [K]$ $c_{i,1} := 1$, $h_{i,1} := \lceil{2^{K-i-1}}\rceil$ \;
      $E_a := \{ e_1, e_2, \dots e_K \}$ \;
      \For{t = 1, 2, \dots, T}{
        play \hspace{-1mm} $\mathbf{p}_t \hspace{-0.5mm} = \hspace{-0.5mm} \Bigl[ \hspace{-0.5mm} \ind{e_i \in E_a} \cdot \frac{h_{i,t} \cdot c_{i,t}}{\sum_{j=1}^{k} h_{j,t} \cdot c_{j,t}} \hspace{-0.5mm} \Bigr]_{i \in [K]}$\;
        receive $(\ell_{1,t},\dots,\ell_{K,t})$\;
        \For{ $e_i \in E_a$}{
          $c_{i, t+1} := c_{i,t} \cdot e^{-\eta \ell_{i, t}}$\;
          $h_{i, t+1} := h_{i, t}$ 
        }
        \If{expert j dies}{
          $E_a$ :=  $E_a \setminus \{e_j\}$ \;
          \For{ each $i=j+1$ to $K$}{
            $h_{i, t+1} := h_{i, t+1} \cdot c_{i, t+1}$\; $\hspace{0.2cm}+ (h_{j, t+1} \cdot c_{j, t+1})(\nicefrac{\ceil{2^{i-2}}}{2^{K-1-j}})$ \;
            $c_{i, t+1} := 1$ \;
          }
        }
      }
      \label{alg:hedgepermknown}  
      \caption{\protect Hedge-Perm-Known (HPK)}
    \end{algorithm}
  \end{minipage}
  
\end{figure}
%%%%%%%%%%%%%%%%%%%%%%%%%%%%%%%%%%%%%%%%%%%% 

\section{Efficient algorithms for dying experts} \label{sec:algorithm}

The results of \cite{kanade2014learning} imply computational hardness of achieving no-regret algorithms in sleeping experts; yet, we are able to provide efficient algorithms for dying experts in the cases of unknown and known order of dying. For the sake of simplicity, we initially assume that only one expert dies each night. Later, in Section~\ref{sec:extend-alg}, we show how to extend the algorithms for the general case where multiple experts can die each night. We then show how to extend these algorithms to adaptive algorithms such as AdaHedge (\cite{derooij2014follow}). The algorithms for both cases are given in Algorithms~\ref{alg:hedge_perm}~and \ref{alg:hedgepermknown}.

\subsection{Unknown order of dying}
\label{sec:algorithm-unknown}

We now show how to efficiently implement Hedge over the set of all the orderings. Even though Resetting-Hedge is already efficient and achieves optimal regret, it has its own disadvantages. The issue arises when one needs to extend Resetting-Hedge to adaptive algorithms. This is particularly important in real-world scenarios, where Learner wants to adapt to the environment (such as stochastic or adversarial losses). We show that Algorithm~\ref{alg:hedge_perm}, Hedge-Perm-Unknown (HPU), can be adapted to AdaHedge (\cite{vanerven2011adaptive}) and, therefore, we can simulate FlipFlop (\cite{derooij2014follow}). Next, we give the main idea on how the algorithm works, after which we prove that Algorithm~\ref{alg:hedge_perm} efficiently simulates running Hedge over $\Pi$. Before proceeding further, let us recall how Hedge makes predictions in round $t$. First, it updates the weights using $w_{i,t} = w_{i,t-1} e^{-\eta \ell_{i,t}}$, and it then assigns a probability to expert $i$ as follows:
\begin{equation*}
    p_{i,t} = \frac{w_{i,t-1}}{\sum_{i=1}^K w_{i,t-1}} \,\, .
\end{equation*}

Recall that $e_1, e_2, \dots, e_K$ denote the original experts while $\pi_1, \pi_2, \dots \pi_{K!}$ denote the orderings. Denote by $w_{\pi}^t$ the weight that Hedge assigns to $\pi$ in round $t$. 
Define $\Pi_{i}^t \subseteq \Pi$ to be the set of orderings predicting as expert $e_i$ in round $t$. The main ideas behind the algorithm are as follows:
\begin{enumerate}
\item  When $\pi$ and $\pi'$ have the same prediction $e$ in round $t$ (i.e.~$\sigma^t(\pi) = \sigma^t(\pi') = e$), then we do not need to know $w_{\pi}^t$ and $w_{\pi'}^t$; we use $w_{\pi}^t + w_{\pi'}^t$ instead for the weight of $e$. 
\item The algorithm maintains $\sum_{\pi \in \Pi_{j}^t}e^{-\eta L_{\pi}^{t-1}}$, where $\eta$ is the learning rate and $L_{\pi}^{t}$ is the cumulative loss of ordering $\pi$ up until round $t$, i.e., $L_{\pi}^{t} = \sum_{s=1}^t \ell_{\sigma^s(\pi), s}$.
\end{enumerate}

We will discuss how to tune $\eta$ later. Let $J = \{j_1, \dots, j_m\}$ represent the rounds on which any expert will die. Denote by $j_t$ the last night observed so far at the end of round $t$, formally defined as $j_t = \max_{j \in J} j \leq t$. We maintain a tuple $(h_{i, t}, c_{i, t})$ for each original expert $e_i$'s in the algorithm in round $t$, where $h_{i,t}$ is the sum of non-normalized weights of the experts in $\Pi_i^t$ in round $j_t$. We similarly maintain $c_{i,t}$, except that it only considers the loss suffered from $j_t + 1$ to round $t-1$ for experts in $\Pi_i^t$. Formally:
\begin{equation*}
  h_{i,t} = \sum_{\pi \in \Pi_{i}^t} e^{-\eta (\sum_{s=1}^{j_t}\ell_{\sigma^s(\pi),s})}, \hspace{0.05\linewidth} c_{i,t} = \sum_{\pi \in \Pi_{i}^t} e^{-\eta (\sum_{s=j_t+1}^{t-1}\ell_{\sigma^s(\pi),s})} \,\, .
\end{equation*} 
It is easy to verify that $h_{j,t} \cdot c_{j,t} = \sum_{\pi \in \Pi_{j}^t} e^{-\eta L_{\pi}^{t-1}}$.

The computational cost of the algorithm at each round will be $\bigO(K)$. We claim that HPU will behave the same as executing Hedge on $\Pi$. We use induction on rounds to show the weights are the same in both algorithms. By ``simulating'' we mean that the weights over the original experts will be maintained identically to how Hedge maintains them.
\begin{theorem}%[Hedge Perm]
  \label{thm:HPU}
  At every round, HPU simulates running Hedge on the set of experts $\Pi$.
\end{theorem}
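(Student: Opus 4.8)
The plan is to prove the claim by induction on the round $t$, where the inductive invariant is that after the updates at the end of round $t-1$, the quantities maintained by HPU satisfy the definitional identities stated just before the theorem, namely $h_{i,t}\cdot c_{i,t} = \sum_{\pi\in\Pi_i^t} e^{-\eta L_\pi^{t-1}}$ for every original expert $e_i$ (with $h_{i,t}$ and $c_{i,t}$ separately matching their stated decompositions relative to the last night $j_t$). Once this invariant is established, the prediction $\mathbf{p}_t$ played by HPU is exactly $\bigl[\ind{e_i\in E_a}\cdot(h_{i,t}c_{i,t})/\sum_j h_{j,t}c_{j,t}\bigr]_i = \bigl[\sum_{\pi\in\Pi_i^t} w_\pi^t \big/ \sum_{\pi} w_\pi^t\bigr]_i$, which by idea (1) is precisely the distribution over original experts induced by running Hedge on $\Pi$; so the theorem follows immediately from the invariant.

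The base case is $t=1$: no night has occurred, every ordering has zero cumulative loss, $\Pi_i^1$ is the set of $(K-1)!$ orderings starting with $e_i$, and indeed $c_{i,1}=1$ and $h_{i,1}=(K-1)!$, so $h_{i,1}c_{i,1}=(K-1)! = \sum_{\pi\in\Pi_i^1} e^{-\eta\cdot 0}$. For the inductive step I would split into two cases according to whether round $t-1$ was a night. If round $t-1$ was \emph{not} a night, then $\Pi_i^t=\Pi_i^{t-1}$ for every $i$ (no expert died, so no ordering changes its prediction), $j_t=j_{t-1}$, and the only change is the multiplicative loss update $c_{i,t}=c_{i,t-1}e^{-\eta\ell_{i,t-1}}$, $h_{i,t}=h_{i,t-1}$; this matches the definition since the inner sum $\sum_{s=j_t+1}^{t-1}$ gains exactly the term $\ell_{\sigma^{t-1}(\pi),t-1}=\ell_{i,t-1}$ for every $\pi\in\Pi_i^t$, which factors out of the sum. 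The substantive case is when expert $e_j$ dies at the end of round $t-1$: now $j_t=t-1$, and the orderings in $\Pi_j^{t-1}$ must be redistributed, because each such $\pi$ now predicts with its \emph{second} alive expert. The key combinatorial fact to verify is that $\Pi_j^{t-1}$ splits uniformly across the surviving available experts: of the orderings that were predicting $e_j$, exactly a $1/|E_a|$ fraction (after removing $e_j$) now predict each surviving expert $e_i\in E_a$, because for a uniformly chosen ordering starting with $e_j$ among the alive experts, the next alive expert is uniform over the remaining alive experts. Hence $\Pi_i^t = \Pi_i^{t-1}\,\dot\cup\,(\text{a }1/|E_a|\text{ share of }\Pi_j^{t-1})$, and taking non-normalized weights this gives exactly the update $h_{i,t} \leftarrow h_{i,t}c_{i,t} + (h_{j,t}c_{j,t})/|E_a|$ followed by resetting $c_{i,t}\leftarrow 1$ — where folding $c$ into $h$ is correct precisely because $t-1$ is now the new last night, so the "since last night" loss sum is empty. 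I would also need to note that after the death, $h_{i,t}c_{i,t}$ aggregates weights of orderings whose current prediction is $e_i$ regardless of where $e_j$ sat in the ordering relative to $e_i$ — the orderings originally in $\Pi_i^{t-1}$ keep predicting $e_i$ whether or not $e_j$ preceded $e_i$ (since $e_j$ is now dead), and this is exactly the "redundancy collapse" that makes the $O(K)$ representation faithful.

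The main obstacle I expect is making the uniform-redistribution claim fully rigorous: one has to carefully track how the \emph{behaviors} (not just first elements) of orderings evolve so that $\Pi_i^{t+1}$ is genuinely a disjoint union of the old $\Pi_i^t$ and a clean $1/|E_a|$-fraction of $\Pi_j^t$, and argue that this fraction is realized at the level of the \emph{weighted} sums and not merely in expectation over a random ordering. The cleanest route is to exhibit an explicit bijection: each $\pi\in\Pi_j^t$ with second-alive expert $e_i$ maps to a unique ordering in $\Pi_i^t$-after-death, and to show that the weight $e^{-\eta L_\pi^{t-1}}$ is preserved under this correspondence because $L_\pi^{t-1}$ depends on $\pi$ only through its behavior $(\sigma^1(\pi),\dots,\sigma^{t-1}(\pi))$, which is unchanged by the death. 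Combined with a counting argument that exactly $|\Pi_j^t|/|E_a|$ orderings land in each $\Pi_i^t$ — which for the all-orderings case is the statement that among the $(K'-1)!$ orderings of the $K'$ alive experts that start with $e_j$, exactly $(K'-2)!$ have any given $e_i$ second — the update is justified. One should also remark that the weights $h_{j,t},c_{j,t}$ of the now-dead expert are no longer read (the indicator $\ind{e_i\in E_a}$ zeroes out $e_j$ in all future predictions), so it is harmless that they are not reset; only the surviving entries matter, and they have been updated consistently. I do not anticipate difficulty with the arithmetic of the Hedge weight recursion itself — that part is the routine bookkeeping that idea (2) was designed to make transparent.
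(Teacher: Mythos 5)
Your overall structure matches the paper's proof exactly: the same inductive invariant $h_{i,t}c_{i,t}=\sum_{\pi\in\Pi_i^t}e^{-\eta L_\pi^{t-1}}$, the same base case, and the same split into a no-death case (routine) and a death case (the crux). You have also correctly located the one non-trivial point, namely that the redistribution of $\Pi_j^{t-1}$ must be uniform \emph{at the level of weighted sums}. However, the argument you propose to close that point does not actually close it. You offer two ingredients: (a) each $\pi\in\Pi_j^{t-1}$ lands in exactly one $\Pi_i^t$ with its individual weight $e^{-\eta L_\pi^{t-1}}$ unchanged by the death, and (b) a counting argument showing that equally many orderings land in each $\Pi_i^t$. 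These two facts together give equal \emph{cardinalities} of the sub-groups $\Pi_i^t\cap\Pi_j^{t-1}$, but not equal weighted sums: the weights $e^{-\eta L_\pi^{t-1}}$ are not constant over $\Pi_j^{t-1}$ (they depend on which now-dead experts preceded $e_j$ in $\pi$, hence on $\pi$'s behavior in earlier rounds), so a priori the heavier orderings could all be routed to one surviving expert. Equal counts plus unchanged individual weights is exactly the ``in expectation over a random ordering'' statement you yourself warned is insufficient.

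What is missing is a loss-preserving bijection \emph{between the destination sub-groups}: for any two alive experts $e_i,e_k$, map $\pi=(\pi_d,e_j,\pi_{d'},e_i,\pi_{R_i})\in\Pi_i^t\cap\Pi_j^{t-1}$ to $\pi'=(\pi_d,e_j,\pi_{d'},e_k,\pi_{R_k})$ by swapping $e_i\leftrightarrow e_k$ in the suffix while keeping the prefix through $e_j$ (and the dead experts immediately after it) fixed. Because $e_j$ is alive through round $t-1$, the behavior $(\sigma^1(\pi),\dots,\sigma^{t-1}(\pi))$ is determined by the common prefix, so $L_\pi^{t-1}=L_{\pi'}^{t-1}$, and hence $\sum_{\pi\in\Pi_i^t\cap\Pi_j^{t-1}}e^{-\eta L_\pi^{t-1}}$ is the same for every alive $i$, which is the $1/\lvert E_a^t\rvert$ split the update rule implements. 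You already have the key observation needed to make this work (that $L_\pi^{t-1}$ depends on $\pi$ only through its behavior, which is determined by the prefix up to $e_j$); you just apply it to the wrong map, the identity on $\Pi_j^{t-1}$, rather than to the swap between sub-groups. Note also that this uniformity genuinely relies on the full symmetry of $\Pi$: in HPK, where the comparator set is $\cE$, the same redistribution is \emph{not} uniform, which is a useful sanity check that the counting-only argument cannot suffice.
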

\begin{proof}[Proof Sketch]
  The main idea is to group the permutation experts with similar predictions (the first expert alive in the permutation) in one group. Hence, initially there will be $K$ groups. Then, if expert $e_j$ dies, every ordering in the group associated with $e_j$ will be moved to another group and the empty group will be deleted. 
  We prove that the orderings will distribute to other groups symmetrically after a night. Using this fact, we show that we do not need to know the elements of a group; we only maintain the sum of the weights given to all the orderings in each group.
\end{proof}

\subsection{Known order of dying}
For the case of known order of dying, we propose Algorithm~\ref{alg:hedgepermknown}, Hedge-Perm-Known (HPK), which is slightly different than HPU. In particular, the weight redistribution (when an expert dies) and initialization of coefficient $h_{i, 1}$ is different. In the proof of Theorem~\ref{thm:HPU}, we showed that when the set of experts includes all the orderings, the weight of the expert $j$ that died will distribute equally between initial experts ($e_j \in E$). But when the set of experts is only the effective experts, this no longer holds. In this section, we assume without loss of generality that the experts die in the order $e_1, e_2, \dots$ and recall that $\cE$ denotes the set of effective orderings. Based on Theorem~\ref{thm:number-of-effective-experts}, the number of experts starting with $e_i$ in $\cE$ is $\ceil{2^{K-i-1}}$; we denote the set of such experts as $\cE_{e_i}$.

\begin{theorem}%[Hedge Perm Same as Hedge]
  \label{thm:HPK}
  At every round, HPK simulates running Hedge on the set of experts $\cE$.
\end{theorem}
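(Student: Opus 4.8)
The plan is to mimic the structure of the proof of Theorem~\ref{thm:HPU}, but with two modifications that account for the fact that HPK tracks only the effective orderings $\cE$ rather than all of $\Pi$. As in the unknown case, I would proceed by induction on the round $t$, maintaining the invariant that for every initial expert $e_i$ that is still alive, the product $h_{i,t}\cdot c_{i,t}$ equals $\sum_{\pi\in\cE_{e_i}^t} e^{-\eta L_\pi^{t-1}}$, where $\cE_{e_i}^t$ is the set of effective orderings whose first alive expert in round $t$ is $e_i$. Given this invariant, the prediction $\mathbf{p}_t$ played by HPK is exactly the Hedge distribution over $\cE$ collapsed onto initial experts, which is what ``simulates'' means. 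The base case uses Theorem~\ref{thm:number-of-effective-experts}: with the stated death order $e_1,e_2,\dots$, the number of effective orderings whose first element is $e_i$ is $\ceil{2^{K-i-1}}$ (this is the count of effective orderings among those starting with $e_i$, matching the $h_{i,1}$ initialization), and all of them have cumulative loss $0$, so $c_{i,1}=1$ is correct.

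The inductive step splits into two cases. On a round with no death, both $h$ and $c$ behave exactly as in HPU: $c_{i,t+1}$ absorbs the factor $e^{-\eta\ell_{i,t}}$ (correct since every $\pi\in\cE_{e_i}^t$ suffers loss $\ell_{i,t}$ in round $t$, and $\cE_{e_i}^{t+1}=\cE_{e_i}^t$ because no alive expert's predictions change), and $h_{i,t+1}=h_{i,t}$ since $j_t$ has not advanced. The substantive case is a night on which $e_j$ dies. Here I need to determine, for each still-alive $e_i$ with $i>j$, how many of the orderings in $\cE_{e_j}^t$ migrate into $\cE_{e_i}^{t+1}$ --- i.e.\ the fraction of effective orderings predicting $e_j$ whose \emph{next} alive expert is $e_i$. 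The claim implicit in the algorithm is that this fraction is $\nicefrac{\ceil{2^{i-2}}}{2^{K-1-j}}$ (after accounting for the deletion of $e_j$ from the active set). I would establish this by a direct counting argument on the structure of effective orderings: among orderings with $e_j$ first, the position just after $e_j$ being occupied by $e_i$ constrains the remaining suffix, and by Theorem~\ref{thm:number-of-effective-experts} (applied to the reduced problem after $e_j$'s death, where the new death order is $e_{j+1},e_{j+2},\dots$) the number of effective orderings with $e_i$ now first is $\ceil{2^{K-i-1}}$ out of a total $\ceil{2^{K-j-1}}\approx 2^{K-1-j}$ effective orderings that had $e_j$ first --- giving the stated ratio. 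Then I verify the weight-update line: each migrating ordering carries its non-normalized weight $e^{-\eta L_\pi^t}$, so the mass $h_{j,t+1}\cdot c_{j,t+1}$ is redistributed in these proportions onto the $h_{i,t+1}$'s, after which $c_{i,t+1}$ is reset to $1$ and $j_t$ advances to the current night, which is consistent with the definitions of $h$ and $c$.

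The main obstacle I anticipate is the combinatorial counting in the night case: proving rigorously that the effective orderings predicting the dying expert $e_j$ split among the surviving experts $e_{j+1},\dots,e_K$ in exactly the proportions $\ceil{2^{i-2}} : \dots$ baked into the algorithm, and in particular handling the ceilings/floors carefully (the $\ceil{\cdot}$'s come from $D\bmod m$-type rounding when the count is not a clean power of two). Unlike the unknown case, where symmetry over \emph{all} of $\Pi$ gives a clean ``equal split'' argument, here the restriction to a canonical representative per behavior class breaks that symmetry, so the split is weighted. I would isolate this as a lemma --- a purely combinatorial statement about counts of effective orderings with prescribed first and second alive experts --- prove it from Theorem~\ref{thm:number-of-effective-experts} by conditioning on the second element, and then the rest of the induction goes through mechanically as above. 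A secondary technical point is checking that the weights $e^{-\eta L_\pi^t}$ of the migrating orderings are uniform across the class $\cE_{e_j}^t$ (they are, since all such $\pi$ have predicted identically through round $t$), which is exactly what makes the scalar redistribution valid.
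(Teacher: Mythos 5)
Your plan is correct and follows essentially the same route as the paper: induction on $t$ with the invariant $h_{i,t}\cdot c_{i,t}=\sum_{\pi\in\cE_{e_i}^t}e^{-\eta L_\pi^{t-1}}$, the base case from Theorem~\ref{thm:number-of-effective-experts}, the no-death case inherited from Theorem~\ref{thm:HPU}, and a weighted (rather than uniform) redistribution on nights justified by counting effective orderings with prescribed first and second alive experts. The combinatorial lemma you defer is exactly what the paper isolates as Lemmas~\ref{lemma:effectivetime} and~\ref{lemma:copy} (proved via the recursive construction of $\cE$ and the behavior-multiset doubling), and your observation that all $\pi\in\cE_{e_j}^t$ carry identical cumulative loss is the same fact the paper uses to make the scalar redistribution valid.
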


\paragraph{Remarks for tuning learning rates.} For both algorithms, we assume $T$ is known beforehand. So, the learning rate for HPU is $\eta = \sqrt{(2\log (K!))/T}$ and for HPK is $\eta = \sqrt{(2\log (2^m(K-m)))/T}$. One can use a time-varying learning rate to adapt to $T$ in case it is not known.

\subsection{Extensions for algorithms} \label{sec:extend-alg}
As we mentioned at the beginning of Section~\ref{sec:algorithm}, for the sake of simplicity we initially assumed that only one expert dies each night. First, we discuss how to handle a night with more than one death. Afterwards, we explain how to extend/modify HPU and HPK to implement the Follow The Leader (FTL) strategy. 
We then introduce a new algorithm which simulates FTL efficiently and maintains $L^*_t$ as well, where $L^*_t$ is the cumulative loss of the best permutation expert through the end of round $t$. Finally, using $L^*_t$, we explain how to simulate AdaHedge and FlipFlop (\cite{derooij2014follow}) by slightly extending HPU and HPK.

\paragraph{More than one expert dying in a night.} 
We handle nights with more than one death as follows. We have one of the experts die on that night, and, for each expert $j$ among the other experts that should have died that night, we create a ``dummy round'', give all alive experts (including expert $j$) a loss of zero, keep the learning rate the same as the previous round, and have expert $j$ die at the end of the dummy round (which hence becomes a ``dummy night''). Even though the number of rounds increases with this trick, it is easy to see that the regret is unchanged since in dummy rounds all experts have the same loss (and also the learning rate after the sequence of dummy rounds is the same as what it would have been had there been no dummy rounds). Moreover, since now one expert dies on each night (some of which may be dummy nights), we may use Theorems~\ref{thm:HPU} or \ref{thm:HPK} to conclude that our algorithm correctly distributes any dying experts' weights among the alive experts.

\paragraph{Beyond adaptivity to $\mathbf{m}$.}

Consider the case of unknown order and let the number of nights $m$ be unknown. As promised, we show that we can improve on the simple Resetting-Hedge strategy.
\begin{theorem}
  \label{thm:quantile}
  Consider running Hedge on top of $K$ copies of HPU where, for $r \in \{0, 1, \ldots, K-1\}$, we set $\varepsilon_r = \prod_{l=0}^{r-1} \frac{1}{K - l}$ and the $r\nth$ copy of HPU uses learning rate $\eta^{\varepsilon_r}_t := \sqrt{8 \log (1/\varepsilon_r)/t}$. Let $\pi^*$ be a best permutation expert in hindsight and suppose that the sequence $(\sigma^1(\pi^*), \ldots, \sigma^T(\pi^*)$ changes experts at most $l$ times. Then the regret of this algorithm is $\bigO \bigl( \sqrt{T (l + 1) \log K}) \bigr)$.
\end{theorem}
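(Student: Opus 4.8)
The plan is to split the regret of the two-level algorithm across its two levels. Write $\hat\ell_t^{(r)}$ for the loss of the $r\nth$ copy of HPU in round $t$, and recall $L_{\pi^*}^T = \sum_{t=1}^T \ell_{\sigma^t(\pi^*),t}$. Fix the copy index $r^* := \min\{l+1,\,K-1\}$. Then
$R_\Pi(1,T) = \bigl(\sum_{t}\hat\ell_t - \sum_t \hat\ell_t^{(r^*)}\bigr) + \bigl(\sum_t \hat\ell_t^{(r^*)} - L_{\pi^*}^T\bigr)$.
The first term is the regret of the outer (anytime-tuned, for $K$ inputs, i.e.\ learning rate $\sqrt{8\log K/t}$) Hedge against one of its $K$ inputs, which is $\bigO(\sqrt{T\log K})$ by the standard anytime Hedge analysis, irrespective of what the copies do. So the real work is to bound the second term — the regret of the $r^*\nth$ copy of HPU against the best permutation — by $\bigO(\sqrt{T(l+1)\log K})$.

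The heart of the argument is a combinatorial lemma: if the behavior $(\sigma^1(\pi^*),\dots,\sigma^T(\pi^*))$ changes at most $l$ times, then the number of orderings with \emph{identical} behavior is at least $(K-l-1)!$. To prove it, let $e_{i_0},e_{i_1},\dots,e_{i_{l'}}$ ($l'\le l$) be the experts appearing in $\pi^*$'s behavior, listed in the order they first appear. In the dying setting an expert never revives, and the leader of an ordering changes only when its current leader dies, so these experts are distinct, each occupies one contiguous block of rounds, and $e_{i_{j-1}}$ dies exactly on the night separating block $j-1$ from block $j$ (while $e_{i_{l'}}$ never dies). Now consider any ordering of the form $(e_{i_0},e_{i_1},\dots,e_{i_{l'}},\,\text{the remaining }K-l'-1\text{ experts in any order})$. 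Following the first-alive expert through the fixed adversarial dying schedule shows such an ordering reproduces $\pi^*$'s behavior on \emph{every} round: while no $e_{i_j}$ has died the leader is $e_{i_0}$ at position $1$; when $e_{i_{j-1}}$ dies the next alive ordering entry is $e_{i_j}$ (alive on that round by definition of $\pi^*$'s behavior); and the trailing $K-l'-1$ experts, all sitting strictly behind the immortal $e_{i_{l'}}$, are never the leader. This produces $(K-l'-1)!\ge(K-l-1)!$ orderings sharing $\pi^*$'s behavior.

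With the lemma in hand I invoke the standard ``cloning'' observation: orderings sharing a behavior make identical predictions every round, hence Hedge over all $K!$ orderings with the uniform prior is exactly Hedge over the (at most $K$) behavior classes with prior mass proportional to class size. By Theorem~\ref{thm:HPU}, the $r^*\nth$ copy of HPU simulates Hedge on $\Pi$ with the anytime learning rate $\eta^{\varepsilon_{r^*}}_t = \sqrt{8\log(1/\varepsilon_{r^*})/t}$, so its regret against $\pi^*$ is at most $\bigO\!\bigl(\sqrt{T\log(K!/|C_{\pi^*}|)}\bigr)$, where $C_{\pi^*}$ is $\pi^*$'s behavior class. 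This bound is legitimate because the anytime tuning is not too aggressive for $\pi^*$'s class: $\varepsilon_{r^*} = \prod_{j=0}^{r^*-1}\frac{1}{K-j} = \frac{(K-r^*)!}{K!} \le \frac{(K-l-1)!}{K!} \le \frac{|C_{\pi^*}|}{K!}$ by the lemma. Finally $\log(K!/|C_{\pi^*}|)\le \log(1/\varepsilon_{r^*}) = \log\!\bigl(K(K-1)\cdots(K-r^*+1)\bigr)\le r^*\log K\le (l+1)\log K$, and adding the outer-level $\bigO(\sqrt{T\log K})$ gives $R_\Pi(1,T)=\bigO\!\bigl(\sqrt{T(l+1)\log K}\bigr)$.

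I expect the main obstacle to be the combinatorial lemma — specifically, the careful verification that the constructed orderings match $\pi^*$'s behavior on \emph{all} rounds (including rounds where non-leader experts die), rather than only at the nights where the leader changes. The remaining pieces are routine: the cloning reduction, the anytime-Hedge regret bound with a time-varying learning rate, and the boundary case $l\ge K-1$ where one falls back to copy $K-1$ and uses $\log K! \le K\log K = ((K-1)+1)\log K$.
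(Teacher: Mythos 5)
Your proof is correct and follows essentially the same route as the paper's: count the behavioral copies of $\pi^*$ obtained by fixing its (at most $l+1$) distinct predictions as a prefix and permuting the remaining experts, apply a quantile bound to the copy of HPU tuned to $\varepsilon_{l+1}$, and pay an additional $\bigO(\sqrt{T\log K})$ for the outer Hedge. You are in fact somewhat more careful than the paper's sketch, which elides both the round-by-round verification that the constructed orderings reproduce $\pi^*$'s behavior and the off-by-one between the number of expert changes and the number of distinct experts used.
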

Note that this theorem does better than adapt to $m$, as with $m$ nights we always have $l \leq m$ but $l$ can in fact be much smaller than $m$ in practice. Hence, Theorem~\ref{thm:quantile} recovers and can improve upon the regret of Resetting-Hedge and, moreover, wasteful resetting is avoided. Also, while the computation increases by a factor of $K$, it is easy to see that one can instead use an exponentially spaced grid of size $\log_2(K)$ to achieve regret of the same order.

\paragraph{Follow the Leader.} FTL might be the most natural algorithm proposed in online learning. In round $t$ the algorithm plays the expert with the lowest cumulative loss up to round $t$, $L^*_{t-1}$. By setting $\eta = \infty$ in Hedge and similarly, in HPU and HPK, we recover FTL; hence, our algorithms can simulate FTL. The motivation for FTL is that it achieves constant regret (with respect to $T$) when the losses are i.i.d.~stochastic and there is a gap in mean loss between the best and second best (permutation) experts. Our algorithms do not maintain $L^*_t$, but we need $L^*_t$ to implement AdaHedge (which we discuss in the next extension). Here, we propose a simple algorithm to perform FTL on the set of orderings. The algorithm works as follows:
\begin{enumerate}
    \item {Perform as FTL on alive initial experts and keep track of their cumulative losses $(L_1^t, L_2^t, \dots, L_K^t)$, while ignoring the dead experts;}
    \item {If expert $j$ dies in round $t'$, then for every alive expert $i$ where $L_i^{t'} > L_j^{t'}$ do: $L_i^{t'} := L_j^{t'}$. }
\end{enumerate}
  
This not only performs the same as FTL but also explicitly keeps track of $L^*_t$. We will use this implementation to simulate AdaHedge.

\paragraph{AdaHedge.} The following change to the learning rate in HPU/HPK recovers AdaHedge. Let $\hat{L}_t = \sum_{r=1}^{t} \hat{\ell}_r$. For round $t$, AdaHedge on $N$ experts sets the learning rate as $\eta_t = (\ln N)/\Delta_{t-1}$ and $\Delta_t = \hat{L}_t - M_t$ where $M_t = \sum_{r=1}^{t} m_r$ and $m_r = -\frac{1}{\eta_r} \ln(\bm{w_r}\cdot e^{-\eta_r \bm{\ell}_r})$; here, $m_r$ can easily be computed using the weights from HPU/HPK. As we have the loss of the algorithm at each round, we can calculate $M_t$. Also, using the implementation of FTL describe above, we can maintain $L^*_t$. Finally, we can compute $\Delta_t$ and the regret of HPU/HPK.

\paragraph{FlipFlop.} By combining AdaHedge and FTL, \cite{derooij2014follow} proposes FlipFlop which can do as well as either of AdaHedge (minimax guarantees and more) or FTL (for the stochastic i.i.d.~case). We can adapt HPK and HPU to FlipFlop by implementing AdaHedge and FTL as described above and switching between the two based on $\Delta_t^{\mathrm{ah}}$ and $\Delta_t^{\mathrm{ftl}}$, where $\Delta_t^{\mathrm{ftl}}$ is defined similar to $\Delta_t^{\mathrm{ah}}$ but the learning rate associated with $m_t$ for FTL is $\eta^{\mathrm{ftl}} = \infty$ while for AdaHedge it is $\eta^{\mathrm{ah}}_t = \frac{\ln K}{\Delta_{t-1}}$.
\begin{corollary}\label{cl:corollary}
  By combining FTL and AdaHedge as described above, HPU and HPK simulate FlipFlop over set of experts $A$ (where $A = \Pi$ for HPU and $A= \cE$ for HPK) and achieve regret 
  \begin{equation*}
    R_A (1, T) 
    < \min\left\{ 
      C_0 R_A^{\mathrm{ftl}}(1,T) + C_1, 
      C_2 \sqrt{\frac{L^*_T(T - L^*_T)}{T} \ln \left(\left|A\right|\right)} 
      + C_3\ln\left(\left|A\right|\right) 
    \right\} ,
  \end{equation*}
  where $C_0, C_1, C_2, C_3$ are constants. 
\end{corollary}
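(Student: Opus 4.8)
The plan is to reduce the corollary to the regret guarantee already proved for FlipFlop by \cite{derooij2014follow}, applied to a game with $N=|A|$ experts. Concretely it suffices to establish two things: (i) the extension of HPU/HPK described above produces, round by round, exactly the sequence of distributions over $A$ that FlipFlop would produce if run directly on the $|A|$ permutation experts, with the same learning-rate schedules and the same flip/flop switching rule; and (ii) the dummy-round reduction used to handle nights with several deaths changes no quantity appearing in the FlipFlop bound. Granting (i) and (ii), the regret of HPU/HPK over $A$ equals the regret of FlipFlop on $|A|$ experts, and substituting $N=|A|$ into the FlipFlop bound of \cite{derooij2014follow} yields the displayed inequality, with $C_0,\dots,C_3$ the corresponding universal constants and $L^*_T=\min_{\pi\in A}L_\pi^T$ the loss of the best ordering (the correct comparator, since by step~2 of our FTL implementation the maintained $L^*_t$ equals $\min_{\pi\in A}L_\pi^t$).

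For (i), recall from Theorems~\ref{thm:HPU} and~\ref{thm:HPK} that $h_{i,t}\cdot c_{i,t}=\sum_{\pi\in\Pi_i^t}e^{-\eta L_\pi^{t-1}}$, i.e.\ the unnormalized Hedge mass that Hedge-on-$A$ places on the orderings predicting $e_i$ in round $t$; hence the played distribution $p_{i,t}\propto \ind{e_i\in E_a}\,h_{i,t}c_{i,t}$ is precisely the total probability FlipFlop-on-$A$ assigns to $\Pi_i^t$. I would then check that every quantity FlipFlop consults is a function only of these group masses and the observed losses. The AdaHedge mixability gap is
\begin{equation*}
  m_r \;=\; -\tfrac{1}{\eta_r}\ln\!\Big(\sum_{i} p_{i,r}\,e^{-\eta_r \ell_{i,r}}\Big),
\end{equation*}
so $M_t$, $\hat L_t=\sum_r \bm p_r\cdot\bm\ell_r$, and therefore $\Delta_t^{\mathrm{ah}}$ are computable from the algorithm's state; the same expression with $\eta^{\mathrm{ftl}}=\infty$ gives $\Delta_t^{\mathrm{ftl}}$. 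The FTL leader over $A$ and its loss $L^*_t$ are produced by the two-line update of the FTL extension, where the death-time correction $L_i^{t'}:=L_j^{t'}$ is exactly what records that every ordering whose prefix was $e_j$ inherits $e_j$'s history at the instant $e_j$ dies. Since the flip/flop switch is triggered solely by comparisons among $\Delta_t^{\mathrm{ah}}$ and $\Delta_t^{\mathrm{ftl}}$, which are now reproduced exactly, an induction on $t$ — whose inductive step is precisely the content of Theorems~\ref{thm:HPU}/\ref{thm:HPK} together with the displayed identities — shows that the entire FlipFlop-on-$A$ trajectory is reproduced.

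For (ii), on a dummy round all alive experts receive loss $0$, so $\hat\ell_r=0$ and, because the normalized weights sum to one, $m_r=-\tfrac{1}{\eta_r}\ln 1=0$; hence $M_t$, $\hat L_t$, $\Delta_t^{\mathrm{ah}}$, $\Delta_t^{\mathrm{ftl}}$ and the FTL state $L^*_t$ are all unchanged across a dummy round, the learning rate is held fixed by construction, and the ranking-regret comparator is unaffected since every surviving ordering incurs the same zero loss. Thus the FlipFlop bound evaluated on the padded sequence agrees term for term with the bound on the original $T$ rounds, and invoking \cite{derooij2014follow} completes the argument. The main obstacle I anticipate is the bookkeeping in (i): AdaHedge's learning rate is history-dependent through $\eta_t=(\ln|A|)/\Delta_{t-1}$, so one must verify that the grouped representation faithfully reproduces the whole coupled recursion rather than a single round, and that this stays consistent with the $h$-weight redistribution performed on nights — i.e.\ that Theorems~\ref{thm:HPU}/\ref{thm:HPK} hold verbatim when $\eta$ varies with $t$. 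They do, since their proofs use only that all orderings sharing a group share the same loss on every round of a day, never that $\eta$ is constant.
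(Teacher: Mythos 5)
The paper gives no standalone proof of this corollary --- it is asserted as an immediate consequence of the constructions in Section~\ref{sec:extend-alg} together with the FlipFlop guarantee of \cite{derooij2014follow} applied with $N=|A|$ --- and your proposal is the natural fleshing-out of exactly that reduction, so the overall route is the intended one. Your observations that $m_r$, $M_t$, $\Delta_t^{\mathrm{ah}}$, $\Delta_t^{\mathrm{ftl}}$ and $L^*_t$ are computable from the grouped state, and that dummy rounds contribute zero to every one of these quantities, are correct (modulo the minor point that padding inflates the horizon appearing in $L^*_T(T-L^*_T)/T$, which matters only when $L^*_T$ is close to $T$).

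However, the step you yourself flag as the main obstacle is where the argument actually breaks. You claim Theorems~\ref{thm:HPU} and~\ref{thm:HPK} hold verbatim for time-varying $\eta$ because their proofs ``use only that all orderings sharing a group share the same loss on every round of a day, never that $\eta$ is constant.'' That is true of the per-round loss \emph{increments} within a day, but not of the \emph{cumulative} losses within a group: the moment a dying expert's orderings are folded into group $i$ (Case II of the proof of Theorem~\ref{thm:HPU}), the group $\Pi_i^t$ contains orderings with distinct histories, and $h_{i,t}$ stores only the aggregate $\sum_{\pi} e^{-\eta L_\pi^{j_t}}$ accumulated under a single fixed $\eta$. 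AdaHedge's weight at round $t$ is proportional to $e^{-\eta_t L_\pi^{t-1}}$ with a fresh $\eta_t$ applied to the entire cumulative loss, and a sum of exponentials of heterogeneous quantities cannot be re-exponentiated under a new rate given only its value under the old one. So the grouped representation does not ``verbatim'' reproduce the AdaHedge/FlipFlop trajectory on $A$; one needs either a finer state (e.g., the multiset of distinct within-group cumulative losses, whose size can grow with the number of nights) or an argument tailored to how $\eta_t$ evolves. The paper does not address this point either, but your stated justification for dismissing it is incorrect, and a complete proof of the corollary must resolve it.
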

The interest in FlipFlop is that in the real-world we may not know if losses are stochastic or adversarial. This motivates one to use an algorithms that detect and adapt to easier situations.

\section{Conclusion} \label{sec:conclusion}

In this work, we introduced the dying experts setting. 
We presented matching upper and lower bounds on the ranking regret for both the cases of known and unknown order of dying. In the case of known order, we saw that the reduction in the number of effective orderings allows our bounds to be reduced by a $\sqrt{\log K}$ factor. 
While it appears to be computationally hard to obtain sublinear regret in the general sleeping experts problem, in the restricted dying experts setting we provided efficient algorithms with optimal regret bounds for both cases.
Furthermore, we proposed an efficient implementation of FTL for dying experts which, combined with efficiently maintaining mix losses, enabled us to extend our algorithms to simulate AdaHedge and FlipFlop. It would be interesting to see if the notion of effective experts can be extended to other settings such as multi-armed bandits. Furthermore, it might be interesting to study the problem in regimes in between known and unknown order.

\subsubsection*{Acknowledgments}

This work was supported by the NSERC Discovery Grant RGPIN-2018-03942.

\bibliography{bibliography}

\appendix

\section{Proofs for Section~\ref{sec:number-of-effective}}

\subsection{Proof of Theorem~\ref{thm:number-of-effective-experts}}

We define a new operator denoted by $+$ which operates between an expert $e$ on the left hand side and a multi-set of orderings $\Pi$ on the right hand side and returns a new multi-set of orderings in which $e$ is added to the left side of every ordering $\pi \in \Pi$. Let $J = \{j_1, \dots, j_m\}$ be the rounds on which any expert will die.

\begin{proof}
  Without loss of generality, assume that experts die in the order of their indices, i.e., $e_1$ dies first, $e_2$ second, \dots,  and $d_{K-A}$ dies last. We use mathematical induction on $m$ to prove the claim.
  
  \textit{Induction Basis: } For $m=0$, or the case that no expert dies (i.e. $A = K$), the number of effective permutation experts is equal to $A$, the number of experts that never die. Hence,
  \begin{equation*}
    f\left(\{\}, A\right) = A .
  \end{equation*}
  \textit{Induction Hypothesis:} We assume that the number of effective experts when there are only $i-1$ nights is equal to 
  \begin{equation*}
    f\left(\{ d_1, d_2, \ldots, d_{i-1} \}, A\right) = A \prod_{s=1}^{i-1} \left(d_s+1\right)
  \end{equation*}
  Denote the set of the effective permutations created in the induction hypothesis as $\cE_{i-1}$.

  \textit{Induction Step: } 
  First, notice that any expert $e \in E_{d}^{j_1+1}$ has an impact on the behavior of a permutation expert $\pi$ only if $e = \sigma^1(\pi)$. If we ignore the first night and remove every $e \in E_{d}^{j_1+1}$ from the orderings, the theorem would behave as though those experts do not exist and there are only $i-1$ nights. Due to the induction hypothesis we know that
  \begin{equation*}
    f(\{ d_2, \dots, d_{i} \}, A) = A \prod_{s=2}^{i} (d_s+1) .
  \end{equation*}
  Denote by $F$ the number of effective orderings $\pi$ where $e = \sigma^1(\pi)$ for some $e \in E_{d}^{j_1+1}$. It is easy to see that
  \begin{equation*}
    f(\{d_1, d_2, \dots, d_{i} \}, A) = F + f(\{ d_2, \dots, d_{i} \}, A) .
  \end{equation*}
  On the other hand, the effective orderings which start with $e_s \in E_{d}^{j_1+1}$  can be constructed as $(e_s) + \cE_{i-1}$, so
  \begin{equation*}
    \left|(e_s) + \cE_{i-1}\right| = \left|\cE_{i-1}\right| = f\left(\{ d_2, \dots, d_{i} \}, A\right) ,
  \end{equation*}
  and it follows that $\cE_i = (\cup_{e_s \in E_{d}^{j_1+1}} ((e_s) + \cE_{i-1})) \cup \cE_{i-1}$. Then due to the induction hypothesis we get:
  \begin{align}
    f(\{ d_1, d_2, \dots, d_{i} \}, A) &= (d_1+1) f(\{ d_2, \dots, d_{i} \}, A) = (A) \Pi_{s = 1}^{i} (d_s+1)\label{effectiveinductionlaststep}
  \end{align}
  and \eqref{effectiveinductionlaststep} completes the induction step, concluding the proof.
\end{proof}

\section{Proofs for Section~\ref{sec:bounds}}

\subsection{Proof of Theorem~\ref{thm:dtol-lower-bound}}

Our lower bound strategy is similar\footnote{But it is simpler since we do not consider the more general game of prediction with expert advice.} to the proof of Theorem 3.7 of \cite{cesa2006prediction} until our equation \eqref{eqn:rademacher-sum}.
\begin{proof}
  Any strategy of Learner over $T$ rounds can be represented as a sequence $\bm{p}$ of $T$ maps $\bm{p}_1, \ldots, \bm{p}_T$, where
  \begin{align*}
    \bm{p}_t : [0,1]^{t-1} \rightarrow \Delta_K .
  \end{align*}
  By representing Learner's strategy in this way, we can write the above minimax regret as:
  \begin{align*}
    \inf_{\mathbf{p}} \sup_{\bm{\loss}_1, \ldots, \bm{\loss}_T} 
    \left\{
    \sum_{t=1}^T \bm{p}_t \cdot \bm{\loss}_t - \min_{j \in [K]} \sum_{t=1}^T \loss_{j,t}
    \right\} .
  \end{align*}
  The minimax regret can only decrease by replacing the supremum over the experts' losses by an expectation over random i.i.d.~losses $\loss_{j,t}$, where, for all $j \in [K]$ and $t \in [T]$, we take $\loss_{j,t}$ to be independently drawn uniformly from $\{0,1\}$; hence, the above is lower bounded by
  \begin{align*}
    \inf_{\mathbf{p}} \E \left[
    \sum_{t=1}^T \bm{p}_t \cdot \bm{\loss}_t - \min_{j \in [K]} \sum_{t=1}^T \loss_{j,t}
    \right] 
    &= \E \left[
      \frac{T}{2}  - \min_{j \in [K]} \sum_{t=1}^T \loss_{j,t}
      \right] \\
    &= \frac{1}{2} \E \left[
      \max_{j \in [K]} \sum_{t=1}^T \left( 1 - 2 \loss_{j,t} \right)
      \right] .
  \end{align*}
  Now, observe that each random variable $(1 - 2 \loss_{j,t})$ has the same law as an independent Rademacher random variable (i.e.~uniform over $\pm 1$). Therefore, the above is equal to
  \begin{align}
    \frac{1}{2} \E \left[
    \max_{j \in [K]} \sum_{t=1}^T \varepsilon_{j,t}
    \right] , \label{eqn:rademacher-sum}
  \end{align}
  where the $\varepsilon_{j,t}$ are independent Rademacher random variables.
  
  Our approach will be to express the above as the expected maximum of a Bernoulli process. To this end, for each $j \in [K]$ define the matrix $\tau^{(j)} \in \reals^{T \times K}$ whose $j\nth$ column is equal to the ones vector and whose remaining columns each are equal to the zero vector. 
  Our lower bound on the minimax regret now can be rewritten as (one half of)
  \begin{align*}
    \E \left[\max_{j \in [K]} \sum_{t=1}^T \varepsilon_{j,t}\right] 
    &= \E \left[\max_{j \in [K]} \sum_{i=1}^K \sum_{t=1}^T \varepsilon_{i,t} \tau^{(j)}_{i,t}\right] .
  \end{align*}
  
  This is just the expected supremum of a Bernoulli process indexed by $\{\tau^{(1)}, \ldots, \tau^{(K)}\}$. A result of \cite{talagrand2005generic}, restated as Lemma~\ref{lemma:sudakov-bernoulli} after this proof, can be used to lower bound this process in terms of $T$ and $K$. 
  In our setting, treating the matrices $\tau^{(j)}$ as vectors by stacking their columns, we see that the vectors $(\tau^{(j)})_j$ satisfy
  \begin{itemize}
  \item $\| \tau^{(i)} - \tau^{(j)} \|_2 \geq \sqrt{2 T}$ for all distinct $i, j \in [K]$;
  \item $\| \tau^{(j)} \|_\infty \leq 1$ .
  \end{itemize}
  Hence, Lemma~\ref{lemma:sudakov-bernoulli} implies that the minimax regret is lower bounded by
  \begin{align*}
    \frac{1}{2 L} \min \left\{ \sqrt{2 T \log K}, 2 T \right\} ,
  \end{align*}
  for $L$ a universal constant.
\end{proof}

The above proof uses the following powerful result of Talagrand on Sudakov minoration for Bernoulli processes; here, the most convenient form is stated as Theorem 4.2.4 in \cite{talagrand2005generic}, but the result first appeared as Proposition 2.2 of \cite{talagrand1993regularity} in a quite different form.
\begin{lemma}[Sudakov minoration for Bernoulli processes {\cite[Theorem 4.2.4]{talagrand2005generic}}]
  \label{lemma:sudakov-bernoulli} 
  Let $a, b > 0$ and $\tau^{(1)}, \ldots, \tau^{(K)} \in \ell^2$ satisfy the conditions:
  \begin{itemize}
  \item $\| \tau^{(i)} - \tau^{(j)} \|_2 \geq a$ for all distinct $i, j \in [K]$ ;
  \item $\| \tau^{(j)} \|_\infty \leq b$ for all $j \in [K]$ ,
  \end{itemize}
  Let $\varepsilon_1, \varepsilon_2, \ldots$ be i.i.d Rademacher random variables.
  
  Then for a universal constant $L$ we have
  \begin{align*}
    \E \sup_{j \leq K} \sum_{s \geq 1} \tau^{(j)}_s \varepsilon_s 
    \geq \frac{1}{L} \min \left\{ a \sqrt{\log K} , \frac{a^2}{b} \right\} .
  \end{align*}
\end{lemma}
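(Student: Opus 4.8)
Since this lemma is a cited theorem of Talagrand, in the paper it is used as a black box; the plan below is how I would reprove the \emph{lower} bound from more standard pieces (the matching upper bound, via the Bednorz--Lata\l a resolution of the Bernoulli conjecture, is far deeper and is not needed here). The engine is the classical Sudakov minoration for \emph{Gaussian} processes: if $g_1, g_2, \ldots$ are i.i.d.\ standard normal and $G_j := \sum_s \tau^{(j)}_s g_s$, then the $\ell^2$-separation $\|\tau^{(i)} - \tau^{(j)}\|_2 \geq a$ alone forces $\E \sup_{j \leq K} G_j \geq \frac{1}{L_0}\, a\sqrt{\log K}$. The whole difficulty is therefore to transfer a Gaussian lower bound to the Bernoulli process $B_j := \sum_s \tau^{(j)}_s \varepsilon_s$, and this transfer is lossless only when the coordinates are small; the $\ell^\infty$ bound $b$ quantifies exactly how much is lost. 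I would organize everything around the single dichotomy comparing $a/b$ with $\sqrt{\log K}$, which is precisely where the two terms of the minimum cross, since $a\sqrt{\log K} \leq a^2/b \iff \sqrt{\log K} \leq a/b$. First I would reduce from $\ell^2$ to finitely many coordinates by a routine truncation so that all sums are finite.

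\textbf{Small-coordinate regime} ($b \leq c_0\, a/\sqrt{\log K}$, so the minimum equals $a\sqrt{\log K}$). Here I would show $\E \sup_j B_j \geq \tfrac12 \E \sup_j G_j$ by a Lindeberg swapping argument applied to a smoothed maximum $F_\beta(x) := \frac{1}{\beta}\log \sum_{j} e^{\beta \langle \tau^{(j)}, x\rangle}$, which satisfies $0 \leq F_\beta(x) - \max_j \langle \tau^{(j)}, x\rangle \leq \frac{\log K}{\beta}$. Replacing the coordinates of $\varepsilon$ by those of $g$ one at a time, the zeroth, first, and second moments match (both variables are centered with unit variance) and the third moments \emph{vanish by symmetry}, so the leading error is fourth order, of schematic size $\beta^3 \sum_s \max_j |\tau^{(j)}_s|^4$ (the honest bound must track the softmax weights, as discussed below). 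Choosing the temperature $\beta \asymp \sqrt{\log K}/a$ makes the smoothing gap $\tfrac{\log K}{\beta} \lesssim a\sqrt{\log K}$ of the right order, while the fourth-order error is controlled using $|\tau^{(j)}_s| \leq b$ together with $b \leq c_0\, a/\sqrt{\log K}$; taking $c_0$ small enough absorbs it into half of the Gaussian bound. Composing with Gaussian Sudakov then yields $\E\sup_j B_j \geq \frac{c}{2}\,a\sqrt{\log K}$, which is the claim in this regime.

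\textbf{Spiky regime} ($b > c_0\, a/\sqrt{\log K}$, so the minimum equals $a^2/b$). The point is that a Bernoulli sum with $\|\tau^{(j)}\|_\infty \leq b$ can only behave like a Gaussian out to about $a/b$ standard deviations, so the supremum saturates at scale $a^2/b$ rather than $a\sqrt{\log K}$. I would make this rigorous by a simple \emph{subset reduction} back to the first regime: set $K' := \lfloor \exp(c_0^2 (a/b)^2)\rfloor$, which satisfies $K' \leq K$ exactly because we are in the spiky regime, and keep any $K'$ of the vectors. This sub-collection still has pairwise $\ell^2$-separation $\geq a$ and $\ell^\infty$-norms $\leq b$, and now $b \leq c_0\, a/\sqrt{\log K'}$ by construction, so the small-coordinate bound applies to it and gives $\E \sup_{j \leq K} B_j \geq \E \sup_{j \leq K'} B_j \geq \frac{c}{2}\, a\sqrt{\log K'} \gtrsim a\cdot(a/b) = a^2/b$. (When $a \leq b$ this is even simpler: two well-separated points and Khintchine's inequality already give $\E\sup_j B_j \geq \tfrac12 \E|B_i - B_j| \gtrsim a \geq a^2/b$.)

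\textbf{Main obstacle.} The crux is the transfer step in the small-coordinate regime, i.e.\ justifying $\E\sup_j B_j \gtrsim \E\sup_j G_j$. For Gaussians one would simply invoke Slepian's inequality to handle the correlations among the $G_j$, but there is no Slepian lemma for Bernoulli processes --- this absence is precisely what makes Bernoulli suprema genuinely different from Gaussian ones and is the ultimate source of the $a^2/b$ term. The Lindeberg/smooth-max route sidesteps Slepian, but the delicate part is bounding the fourth-order swapping error uniformly in $K$, since the high-order derivatives of the log-sum-exp involve sums over all $j$ that must be reorganized into softmax-weighted (hence $O(1)$-mass) quantities before the $\ell^\infty$ bound can be used; keeping $\beta \asymp \sqrt{\log K}/a$ and exploiting $b \lesssim a/\sqrt{\log K}$ in exactly this strength is what makes the two effects balance. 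Everything else --- the finite-coordinate reduction, the subset reduction, and the two-point Khintchine case --- is routine.
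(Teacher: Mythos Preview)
The paper does not prove this lemma; it is quoted from Talagrand and invoked as a black box, exactly as you note in your first sentence. There is therefore no ``paper's own proof'' to compare against, and nothing in your write-up conflicts with what the paper does.

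Since you nonetheless offer an independent argument, a brief assessment: the global architecture --- dichotomy at $b \asymp a/\sqrt{\log K}$, Gaussian Sudakov plus a Gaussian--Bernoulli transfer in the small-$b$ regime, and subset reduction in the spiky regime --- is the right shape and matches Talagrand's own outline; the subset reduction and the two-point Khintchine endgame are correct. The gap is the Lindeberg transfer. Reorganizing $\partial_s^4 F_\beta$ into softmax-weighted cumulants gives $|\partial_s^4 F_\beta(x)| \le C\beta^3 b^2\,\mathrm{Var}_{p(x)}(\tau_s)$, and at a \emph{fixed} $x$ one has $\sum_s \mathrm{Var}_{p(x)}(\tau_s) \le \max_j \|\tau^{(j)}\|_2^2$; but the Lindeberg remainder forces a supremum over the swapped coordinate $x_s$ \emph{before} summing in $s$, which collapses the softmax average back to $\max_j (\tau^{(j)}_s)^2$ and can cost an extra factor $K$ (take the $\tau^{(j)}$ with disjoint supports). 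Even granting the fixed-$x$ bound and the harmless reduction $\max_j\|\tau^{(j)}\|_2 \lesssim a\sqrt{\log K}$, the error balances only at $b \lesssim a/\log K$ rather than $a/\sqrt{\log K}$, and that weaker threshold breaks the arithmetic of the subset reduction (it yields $a^{3/2}/\sqrt{b}$ instead of $a^2/b$). Talagrand's actual proof sidesteps Lindeberg entirely: the small-$b$ regime is handled by a direct lower-tail estimate for Rademacher sums via exponential tilting, namely $\Pr\bigl(\sum_s t_s \varepsilon_s \ge u\bigr) \ge c\,\exp\bigl(-u^2/(2\|t\|_2^2)\bigr)$ for $0 \le u \le \|t\|_2^2/(Lb)$, combined with a selection argument that exploits the $\ell^2$ separation.
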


\subsection{Proof of Theorem~\ref{thm:unknown-lower}}
\begin{proof}
  We prove the theorem using a construction as follows. 
  Recall that we refer to a round as a ``night'' if an expert dies on that round and to each segment between two nights as a ``day''. First, partition $T$ rounds to rounds of length $T' = T/(m+1)$, where $m$ is the number of nights. The goal is to construct a scenario where each day is a game decoupled from the previous ones. This means that the algorithm will be forced to have no prior information about the experts at the beginning of each day. Recall that $\tau_s$ is the set of time-step indices of day $s$, i.e.,$\tau_s = \{t | (s-1)T' < t \leq sT'\}$.

  Each day is divided into two equal parts. Denote by $\tau^1_s$ and $\tau^2_s$ sets of time-step indices of the first half and the second half of day $s$, respectively. Let $\bm{\ell}_{\tau^1_s,i}$ and $\bm{\ell}_{\tau^2_s,i}$ be the sequences of losses of an expert $i$ on the first and second half of the day $s$, respectively. On the first half of day $s$, each expert suffers loss drawn i.i.d. from a Bernoulli distribution with $p=1/2$. At the end of the first half of the day, we choose the expert with the lowest cumulative loss up until now, denoted by $e^*_s$. This expert will suffer no loss in the second half. Also, the adversary forces every other expert $i$ where $ e_i \neq e^*_s$ to suffer losses according to loss sequence $\bm{\ell}_{s_2,i}$ on the second half of the day, where we have element-wise subtraction as $\bm{\ell}_{s_2,i} = \mathbf{1} - \bm{\ell}_{s_1,i}$. Denote by $E_a(s)$ the set of experts alive on day $s$. 
  
  We now analyze the ranking regret of any algorithm for this construction over $T$ rounds. Without loss of generality, suppose the order of experts that are going to die is as $\D = (e_1,e_2,\dots,e_m)$. Also, denote by $\pi^* \in \Pi$ the best ordering over $T$. From the construction, it is clear that $\pi^* = (\D, \dots)$. Therefore, the ranking regret over $T$ rounds is obtained from \eqref{eq:ranking-over-T}. 
  \begin{equation} \label{eq:ranking-over-T}
    R_\Pi(1,T) = \hat{L} - L_{\pi^*} = \hat{L} - \sum_{s=1}^{m+1}\sum_{t \in \tau_s} l_{\sigma^t(\pi^*),t}
  \end{equation}
  where $L_{\pi^*}$ is the cumulative loss of playing according to ordering $\pi^*$. Now we write $R_\Pi(1,T)$ in terms of a sum of classical regrets over each day. Since in our construction the best expert of the day will die at the end of that day, then, for all rounds in a given day, $\sigma^t(\pi^*)$ yields the same expert as the expert that is best for that day according to the ordinary regret. Therefore, we have:
  \begin{align} 
    \hat{L} - \sum_{s=1}^{m+1}\sum_{t \in \tau_s} l_{\sigma^t(\pi^*),t} &= \sum_{s=1}^{m+1} \left(\sum_{t \in \tau_s} \hat{l_t} - \min_{s \leq i \leq K} \sum_{t \in \tau_s} l_{i,t}\right)\nonumber\\
    &= \sum_{s=1}^{m+1} R_{E_a(s)}(\tau_s)\label{eq:normal-regret-over-T}
  \end{align}
  
  where the last equality is obtained from the fact that in our construction, each day is an independent day from the others, meaning the history of losses of the experts does not matter. Combining \eqref{eq:normal-regret-over-T} and \eqref{eq:ranking-over-T}, we have:
  \begin{equation}\label{eq:ranking-equals-sum-of-normals}
    R_\Pi(1,T) = \sum_{s=1}^{m+1} R_{E_a(s)}\left(\tau_s\right)
  \end{equation}
  
  Now it remains to analyze the regret of each day separately. For this, first, we lower bound the regret of each half of the days. Denote by $\hat{L}_s^1$ and $\hat{L}_s^2$ the cumulative losses of the algorithm on the first and second half of the day $s$ with length, respectively. It is easy to verify that $R_{E_a(s)}(\tau_s) \geq R_{E_a(s)}(\tau^2_s)$, hence for the regret of each day we have
  \begin{align}
    R_{E_a(s)}(\tau_s) = \hat{L}_s^1 + \hat{L}_s^2 - \sum_{t \in \tau_s} l_{\sigma^t(\pi^*),t} &\geq \hat{L}^1_s - \sum_{t \in \tau^1_s} l_{\sigma^t(\pi^*),t}\nonumber\\
    &\geq \frac{1}{L}\min\{\sqrt{T'/2 \log (K-s)}, T'\}\label{eq:lower-on-each-day}
  \end{align}
  where the last inequality is based on (the proof of) Theorem~\ref{thm:dtol-lower-bound}. 
  Combining \eqref{eq:ranking-equals-sum-of-normals} and \eqref{eq:lower-on-each-day} we have:
  \begin{align}
    R_\Pi(1,T) &\geq \sum_{s=1}^{m+1} \frac{1}{L}\min\{\sqrt{T'/2 \log (K-s)}, T'\} \nonumber\\ &=\sum_{s=1}^{m+1} \sqrt{T/2 (m+1) \log (K-s)} = \Omega\left(\sqrt{Tm \log K}\right) ,
  \end{align}
  yielding the desired bound.
\end{proof}

The proof of Theorem \ref{thm:unknown-lower} uses the results of following Lemma.
\begin{lemma} \label{lemma:cauchy-sumofsqrt}
  For variables $x_1, \dots, x_m > 0$ and subject to $\sum_{i=1}^{m} x_i = T$, we have:
  \begin{equation*}
    \sum_{i=1}^{m} \sqrt{x_i} \leq \sqrt{mT}
  \end{equation*}
\end{lemma}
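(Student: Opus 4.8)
The plan is to prove the inequality $\sum_{i=1}^m \sqrt{x_i} \le \sqrt{mT}$ subject to $x_i > 0$ and $\sum_{i=1}^m x_i = T$ by a direct application of the Cauchy--Schwarz inequality. First I would write $\sqrt{x_i} = 1 \cdot \sqrt{x_i}$ so that the left-hand side is the standard inner product $\sum_{i=1}^m a_i b_i$ with $a_i = 1$ and $b_i = \sqrt{x_i}$. Then Cauchy--Schwarz gives
\begin{equation*}
  \sum_{i=1}^m \sqrt{x_i} = \sum_{i=1}^m 1 \cdot \sqrt{x_i} \le \sqrt{\sum_{i=1}^m 1^2} \cdot \sqrt{\sum_{i=1}^m x_i} = \sqrt{m} \cdot \sqrt{T} = \sqrt{mT},
\end{equation*}
where the final equality uses the constraint $\sum_{i=1}^m x_i = T$.

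Since both sides of the target inequality are nonnegative (each $x_i > 0$), no issue of signs arises, and the argument is complete. An equivalent route, if one prefers to avoid citing Cauchy--Schwarz directly, is to invoke concavity of the square root together with Jensen's inequality: $\frac{1}{m}\sum_{i=1}^m \sqrt{x_i} \le \sqrt{\frac{1}{m}\sum_{i=1}^m x_i} = \sqrt{T/m}$, and multiplying through by $m$ yields the same bound. Either way the proof is a one-liner.

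There is essentially no obstacle here — the statement is a textbook consequence of Cauchy--Schwarz (or of concavity of $\sqrt{\cdot}$), and the only thing to be careful about is correctly matching the vectors in the Cauchy--Schwarz step so that the all-ones vector contributes the factor $\sqrt{m}$ and the $\sqrt{x_i}$ vector contributes $\sqrt{\sum x_i} = \sqrt{T}$. One could additionally remark that equality holds precisely when all $x_i$ are equal, i.e.\ $x_i = T/m$, which is consistent with how the lemma is applied in the proof of Theorem~\ref{thm:unknown-upper} (where the day lengths $|\tau_s|$ are taken to be equal).
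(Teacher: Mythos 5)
Your proof is correct and is essentially identical to the paper's: both apply the Cauchy--Schwarz inequality to the vectors $(1,\dots,1)$ and $(\sqrt{x_1},\dots,\sqrt{x_m})$, with the all-ones vector contributing the factor $\sqrt{m}$ and the constraint $\sum_i x_i = T$ contributing $\sqrt{T}$. The additional remarks about Jensen's inequality and the equality case are fine but not needed.
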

\begin{proof}
  Denote by $\mathcal{T}$ the vector $\left[ \sqrt{x_1} , \sqrt{x_2} , \dots , \sqrt{x_m}\right]$ and $I = \left[1,1,\dots,1\right]$ vectors of length $m$ where all the elements are equal to one. We have:
  \begin{align*}
    \sum_{i=1}^{m} \sqrt{x_i} = \mathcal{T} \cdot I^T \leq ||\mathcal{T}||\cdot||I|| &\leq \sqrt{m} \sqrt{(\sqrt{x_1})^2 + (\sqrt{x_2})^2 + \dots + (\sqrt{x_m})^2 } \leq \sqrt{mT}
  \end{align*}
  where the first inequality follows from the Cauchy-Schwarz inequality.
\end{proof}

\subsection{Proof of Theorem~\ref{thm:known-lower}}
\begin{proof}
  The construction for this case is similar to the one we previously had for the unknown order of dying. We divide the $T$ rounds into $m/2$ days each of size $T' = 2 T / m$. We choose two experts $\{e_{2s - 1}, e_{2s}\}$ at each day $s$ and they will suffer losses drawn i.i.d. from a Bernoulli distribution with success probability of $p=1/2$. Every expert $e_i \not\in \{e_{2s - 1}, e_{2s}\}$ will suffer constant loss of $1$ during day $s$. At the end of day $s$, both the experts $\{e_{2s - 1}, e_{2s}\}$ will die. Therefore, at the beginning of each day, all the experts have the same loss history and consequently, each day is decoupled from the previous ones. Additionally, we provide extra information to the algorithm, that the best expert of day $s$ is one of the two experts $\{e_{2s - 1}, e_{2s}\}$. Thus, the algorithm needs to track only two experts on a single day. 
  
  Denote by $E_a(s)$ the set of experts alive on day $s$. In the following, we analyze the ranking regret of this construction. We will use the same result from the proof of Theorem~\ref{thm:unknown-lower} to connect ranking regret to the classical regret over each day. Hence, using \eqref{eq:ranking-equals-sum-of-normals}, we have:
  \begin{equation}\label{eq:ranking-equals-sum-of-normals-known}
    R_\Pi(1,T) = \sum_{s=1}^{m/2} R_{E_a(s)}(\tau_s)
  \end{equation}
  Using the bound we obtained from Theorem~\ref{thm:dtol-lower-bound}, for each day $s$, $K=2$ and $T'$ rounds we have: 
  \begin{equation}\label{eq:lower-on-day-regret-known}
    R_{E_a(s)}(\tau_s) \geq \frac{1}{L}\min\{\sqrt{T'/2 \log 2},T'\}
  \end{equation}
  Combining \eqref{eq:ranking-equals-sum-of-normals-known} and \eqref{eq:lower-on-day-regret-known}, we obtain the bound on ranking regret over time horizon $T$ as follows:
  \begin{equation*}
    R_\Pi(1,T) \geq \sum_{s=1}^{m/2} \frac{1}{L}\min\{\sqrt{T'/2 \log 2}, T'\} = \sum_{s=1}^{m/2} \sqrt{T/m} = \Omega\left(\sqrt{mT}\right)
  \end{equation*}The theorem follows.
\end{proof}

\section{Proofs for Section~\ref{sec:algorithm}}
Wherever we refer to Theorem \ref{thm:number-of-effective-experts} in this section, we assume that only one expert dies each night; therefore for $m$ nights (consequently, $m$ dying experts) the value of $f$ (the function defined in Theorem~\ref{thm:number-of-effective-experts}) is $2^m (K-m)$ where $K$ is the number of experts.
\subsection{Proof of Theorem~\ref{thm:HPU}}

\begin{proof}
  We show that the loss and weights of the algorithms are the same at each round, therefore, their regret is the same. Define $\Pi_D$ to be the set of all possible orderings of elements in set $D$ of length $\lvert D \lvert $. We claim that based on the update rules of HPU for $h_{j,t}$ and $c_{j,t}$, for every round and expert we have $\sum_{\pi \in \Pi_j^t} e^{-\eta L_\pi ^{t-1}} = h_{j,t} \cdot c_{j,t}$.

  \textit{Induction Basis}: At round $t=1$, in Hedge, every expert has non-normalized weight of 1. The size of each $\Pi_{e_j}^1$ is $(K-1)!$. The algorithm assigns $h_{i,1} = (K-1)!$ and $c_{i,1} = 1$, therefore the claims hold. 
  
  \textit{Induction Hypothesis}: At the beginning of round $t - 1$, for every alive expert $e_j$, the following holds: 
  \begin{equation}
    \sum_{\pi \in \Pi_j^ {t-1}} e^{-\eta L_\pi^{t-2}} = h_{j,t-1} \cdot c_{j,t-1}
  \end{equation}
  
  \textit{Induction Step}: This step is divided into two cases. First, when $E_{a}^t = E_{a}^{t-1}$. Second, when an expert dies, $|E_{a}^t| = |E_{a}^{t-1}| - 1$. 
  
  \textit{Case I}: If no expert dies at the end of round $t-1$, then for every $i$ we have $\Pi_i^t = \Pi_i^{t-1}$ and $h_{i, t} = h_{i, t-1}$, thus for every alive expert $e_j$ following holds $\sum_{\pi \in \Pi_j^t} e^{-\eta L_\pi^{t-2}} = h_{j,t-1} \cdot c_{j,t-1}$. After the update in Hedge, the weights on $\Pi_j^t$ is $ \underset{\pi \in \Pi_{j}^{t}}{\sum} e^{-\eta (L_\pi^{t-2} + \ell_{j, t-1})}$. On the other hand, in the HPU algorithm, we have the following: 
  \begin{align*}
    h_{j,t}\cdot c_{j,t} = e^{-\eta \ell_{j, t-1}} \cdot  c_{j,t-1} \cdot h_{j,t-1}  = e^{-\eta \ell_{j, t-1}} \sum_{\pi \in \Pi_{j}^{t-1}} e^{-\eta L_{\pi}^{t-2}} \\
    = \sum_{\pi \in \Pi_j^t} e^{-\eta (L_{\pi}^{t-2} + \ell_{j, t-1})}
    = \sum_{\pi \in \Pi_j^t} e^{-\eta L_{\pi}^{t-1} }
  \end{align*}
  
  Where the second equality follows from the induction hypothesis. It can be observed that the weights are identical to the ones from running Hedge on $K!$ experts. 
  
  \textit{Case II}: The second case is when the expert $j$ dies at the end of round $t-1$. 
  Let $i, k$ be arbitrary alive experts not equal to $j$. 
  Observe that any $\pi \in \Pi_i^t \cap \Pi_j^{t-1}$ takes the form $(\pi_d, e_j, \pi_{d^{'}} ,e_i, \pi_{R_i})$,
  where, for some $D, D^{'} \subseteq E_d^t$ where $D \cap D^{'} = \emptyset$ and $R_i := E \setminus ( D \cup D^{'} \cup \{e_j, e_i \} )$, 
  we have that $\pi_d \in \Pi_D$ and $\pi_{d^{'}} \in \Pi_D^{'} $ and $\pi_{R_i} \in \Pi_{R_i}$. 
  Then $\Pi_k^t \cap \Pi_j^{t-1}$ contains a unique element $\pi' = (\pi_d, e_j, \pi_{d^{'}}, e_k, \pi_{R_k})$,
  where D is taken as before and (like before) $R_k := E \setminus ( D \cup D^{'} \cup \{e_j, e_k \} )$ and $\pi_{R_{k}}$ is created only by replacing $e_i$ as $e_k$ in $\pi_{R_{i}}$. 
  Moreover, since their behavior is the same over the first $t-1$ rounds, $\pi$ and $\pi'$ satisfy $L_{\pi}^{t-1} = L_{\pi'}^{t-1}$.
 
  Therefore by symmetry, we can obtain \eqref{eq:first-update} from \eqref{eq:update-symmetry}.
  \begin{align}
    \underset{\pi \in \Pi_{i}^{t}}{\sum} e^{-\eta L_{\pi}^{t-1}} &= \underset{\pi \in \Pi_{i}^{t-1}}{\sum} e^{-\eta L_{\pi}^{t-1}}    + \underset{\pi \in (\Pi_{i}^{t} {\textstyle\cap} \Pi_{j}^{t-1})}{\sum} e^{-\eta L_{\pi}^{t-1}}    \label{eq:update-symmetry}\\
    &= \underset{\pi \in \Pi_{i}^{t-1}}{\sum} e^{-\eta L_{\pi}^{t-1}}    +\frac{1}{{\left\lvert E_{a}^{t}\right\rvert}} \left(\underset{\pi \in  \Pi_{j}^{t-1}}{\sum} e^{-\eta L_{\pi}^{t-1}} \right)  \label{eq:first-update}\\
    &= h_{i, t} \cdot c_{i, t} \nonumber
  \end{align}
  Notice that, given expert $j$ dies at the end of round $t-1$ hence, $\sum_{\pi \in  \Pi_{j}^{t-1}} e^{-\eta L_{\pi}^{t-1}}$ is calculable. Therefore, HPU is always maintaining the weights correctly.
\end{proof}

\subsection{Proof of Theorem~\ref{thm:HPK}}
Here we follow a construction similar to the proof of Theorem~\ref{thm:HPU}, i.e., we do induction on $t$. Before proceeding to the proof, define $\lambda(\pi, t)$ as a function that will remove ineffective elements of a permutation expert at round $t$. An element is said to be ineffective, if it will never be used for the prediction in that permutation or it is dead. Recall that in this section we assumed that the experts die in order, $e_1$ dies first and $e_{K-A}$ last. For example, $(e_4) = \lambda((e_4, e_3, e_2, e_1 ), t)$  and $(e_1, e_3, e_4) = \lambda((e_1, e_3, e_2, e_4), 1)$ with respect to the assumption we made earlier on the order of dying, and if $e_1$ dies at $t=1$ and $e_3$ dies at $t=5$, then $(e_3, e_4) = \lambda((e_1, e_3, e_2, e_4), 3)$ and $(e_4) = \lambda((e_1, e_3, e_2, e_4), 6)$. Naturally, $\lambda(\cE, t)$ performs the function $\lambda(\pi,t)$ on every permutation $\pi \in \cE$. The output of $\lambda(\cE, t)$ is a multi-set, not a set. 

\begin{proof}
  \textit{Induction Basis}: 
  At round $t=0$, each of the permutation-experts have the same weight. Due to the Theorem~\ref{thm:number-of-effective-experts}, we know the number of the orderings starting by expert $e_i$ is equal to $\ceil{2^{K - i - 1}}$. Therefore, in Hedge, the cumulative non-normalized weight put on $\cE_{e_i}^1$ is $\ceil{2^{K - i - 1}}$ which is equal to $h_{i, 1} \cdot c_{i,1}$ in HPK. 
  
  \textit{Induction Hypothesis}: At the beginning of round $t-1$, in Hedge, the cumulative non-normalized weight put on $\cE_{e_i}^{t-1}$ for every $e_i \in E_a^{t-1}$, is equal to $h_{i, t-1} \cdot c_{i,t-1}$
  
  \textit{Induction Step}: As in the proof of Theorem~\ref{thm:HPU}, for round $t$, we split the step into two cases. 
  In the first case, no expert dies, i.e., $E_a^t = E_a^{t-1}$. For the second case, one expert dies at the end of round $t-1$. The proof for the first case is omitted as it is identical to the proof for \textit{Case I} of Theorem~\ref{thm:HPU}. For the case that an expert dies at the end of round $t-1$, we show that the weight distribution works correctly. 
  
  Let $E^{(i+1,K)} = \{e_{i+1}, \dots, e_K\}$. Due to the discussions in Section~\ref{sec:number-of-effective}, first, we have the number of initial orderings starting by $e_i$ as $\lceil 2^{K-i-1} \rceil$ at the beginning. Second, due to Lemma~\ref{lemma:effectivetime}, if $e_i$ dies at round $t-1$, we have:
  
  \begin{equation}
    \lambda\left(\cE_{e_i}^{t-1},\ \ t\right) =\lambda\left(\underset{e \in E^{(i+1,K)}}{\textstyle\bigcup}\cE_{e}^{t-1},\ \ t\right)    
  \end{equation}
  therefore, for every $e_j$ where $j > i$, $(| \cE_{e_j}^{1} |)/({ |\cE_{e_i}^{1}|})$ fraction of $h_{i, t-1} \cdot c_{i,t-1}$ must be added to the weight of $\cE_{e_j}^{t}$ to maintain the weight of $\cE_{e_j}^{t}$. 
\end{proof}

Before proceeding to Lemma~\ref{lemma:effectivetime}, recall that operator $+$ operates between an expert $e$ on LHS and a multi-set of orderings $\Pi$ on RHS and returns a new multi-set of orderings which $e$ is added to the left side of every ordering $\pi \in \Pi$.

\begin{lemma}
  \label{lemma:effectivetime} 
  At round $t$, where $E_d^t = \{e_1, e_2, \dots e_{i-1}\}$ are dead and the rest of the experts are alive, we have:
  \begin{equation*}
    \lambda\left(\cE_{e_i}^{t},\ \ t\right) = (e_i)+\lambda\left(\underset{e \in E^{(i+1,K)}}{\textstyle\bigcup}\cE_{e}^{t},\ \ t\right)    
  \end{equation*}
  and therefore:
  \begin{equation*}
      \left|\cE_{e_i}^{ t}\right| = \left|\lambda\left(\underset{e \in E^{(i+1,K)}}{\textstyle\bigcup} \cE_{e}^{t}, \ \ t\right)\right|.
  \end{equation*}
\end{lemma}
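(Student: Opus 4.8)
The plan is to prove the multi-set identity $\lambda(\cE_{e_i}^t, t) = (e_i) + \lambda(\bigcup_{e \in E^{(i+1,K)}} \cE_{e}^t, t)$ by exhibiting an explicit bijection between the two multi-sets, and then obtain the cardinality statement as an immediate consequence (since $|(e_i) + S| = |S|$ for any multi-set $S$). First I would fix the situation: by round $t$ the dead experts are exactly $e_1, \dots, e_{i-1}$, and $\cE$ is a fixed set of $f(\{1,\dots,1\}, A) = 2^{K-1}$ effective orderings built as in the proof of Theorem~\ref{thm:number-of-effective-experts}, namely $\cE = \cE_i \supseteq \dots$ where at each stage we prepended the newly-dying expert to the previously-constructed effective set. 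The key structural fact I would extract from that construction is: an ordering $\pi \in \cE$ with $\sigma^t(\pi) = e_i$ (i.e.\ $\pi \in \cE_{e_i}^t$) must, after deleting the dead prefix $e_1,\dots,e_{i-1}$ and all elements after its first alive element, look like $(e_i)$ followed by a ``tail'' that is itself an effective ordering of the still-alive experts $\{e_{i+1}, \dots, e_K\}$ relative to the remaining nights. This is essentially the content of the recursion $\cE_i = (\bigcup_{e_s} ((e_s) + \cE_{i-1})) \cup \cE_{i-1}$ read at the level of the $i$-th death.

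The main steps, in order: (1) recall from the construction of $\cE$ in Theorem~\ref{thm:number-of-effective-experts}'s proof that $\cE$ decomposes so that the sub-multiset of orderings whose first element is $e_i$ is precisely $(e_i) + \cE'$, where $\cE'$ is a copy of the effective set for the sub-problem on experts $\{e_i, e_{i+1}, \dots\}$ with the first night removed — but since $e_i$ has been pulled to the front, $\cE'$ ranges over effective orderings on $\{e_{i+1},\dots,e_K\}$; (2) observe that applying $\lambda(\cdot, t)$ commutes with the $(e_i)+$ operation when $e_i$ is alive at round $t$ — i.e.\ $\lambda((e_i) + \Pi, t) = (e_i) + \lambda(\Pi, t)$ — because $e_i$ is the first alive element of every such ordering, so $\lambda$ keeps $e_i$ and then acts on the tail exactly as it would on the standalone tail at round $t$; (3) identify $\lambda(\cE', t)$ with $\lambda(\bigcup_{e \in E^{(i+1,K)}} \cE_e^t, t)$: an effective ordering of $\{e_{i+1},\dots,e_K\}$ has its first alive element in $E^{(i+1,K)}$, and conversely every ordering in $\bigcup_{e \in E^{(i+1,K)}} \cE_e^t$, once the dead prefix $e_1,\dots,e_i$ is stripped, is exactly such a tail — and $\lambda$ at round $t$ erases precisely that prefix and the post-first-alive junk, so the two reduced multi-sets coincide; (4) combine (2) and (3) to get the displayed equality, and read off the cardinality identity.

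The main obstacle I anticipate is step (3): making precise that the multi-set of $\lambda$-reduced tails obtained from orderings in $\cE_{e_i}^t$ matches, \emph{with multiplicity}, the multi-set of $\lambda$-reduced orderings coming from $\bigcup_{e\in E^{(i+1,K)}} \cE_e^t$. One must be careful that $\cE$ was built by a specific (somewhat arbitrary) choice of representatives, so the raw orderings on the two sides need not be literally equal as orderings of $E$; it is only after applying $\lambda(\cdot,t)$ that they agree, and one needs the recursive structure of $\cE$ (each death prepends, and the ``always-awake'' count and per-night multipliers $(d_s+1)=2$ match up) to guarantee that no tail is over- or under-counted. I would handle this by inducting on $i$ (equivalently, on the number of deaths so far), using Theorem~\ref{thm:number-of-effective-experts} and its proof's recursion as the inductive engine, so that at stage $i$ the set $\bigcup_{e\in E^{(i+1,K)}}\cE_e^t$ is, by the inductive description of $\cE$, exactly the union over the next effective sub-problem — which is what $\lambda(\cE', t)$ enumerates. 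The commutation fact in step (2) and the cardinality bookkeeping via $\ceil{2^{K-i-1}}$ are then routine.
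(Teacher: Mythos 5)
Your proposal is correct and follows essentially the same route as the paper: establish the decomposition at round $1$ from the constructive recursion in the proof of Theorem~\ref{thm:number-of-effective-experts}, then lift it to round $t$ by induction on the number of deaths while tracking how each death doubles multiplicities. The multiplicity bookkeeping you flag as the main obstacle is exactly what the paper isolates as Lemma~\ref{lemma:copy} (each reduced ordering appears $2^{i-1}$ times at round $t$), so your planned induction on $i$ would reproduce that lemma.
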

Before proving the statement, let us define two new operators. For $\cE$ as the set of permutation-experts, $\cE - \{ e_i \}$ removes element $e_i$ from every permutation $\pi \in \cE$. Also, $\cE' = x \cE$ is a multi-set where each item in $\cE$ is copied $x$ times. As a result, trivially we have $|\cE'| = x\cdot|\cE|$.
\begin{proof}
  Recall that we assumed that the experts die in order. Due to constructive structure of the Theorem~\ref{thm:number-of-effective-experts}, $\cE_{e_i}^{1}$ is equal to adding $e_i$ as the first element for every permutation in $\cE_{E^{(i+1,K)}}^{1}$.
  \begin{equation*}
    \lambda\left((\cE_{e_i}^{1} ),\ \ 1\right) = (e_i) + \lambda\left( \underset{{e \in E^{(i+1,K)}}}{\textstyle\bigcup}\cE_{e}^{1},\ \ 1\right)
  \end{equation*} 
  Therefore, the claim holds for $t=1$ and we have  $|\lambda(\cE_{e_i}^{1}, 1)| = |\cE_{e_i}^{1}|$. It is easy to verify that:
  \begin{equation*}
    \left|\lambda\left(\underset{{e \in E^{(i+1,K)}}}{\textstyle\bigcup}\cE_{e}^{1},\ \ 1\right)\right| = \left|\underset{e \in E^{(i+1,K)}}{\textstyle\bigcup}\cE_{e}^{1}\right|
  \end{equation*} Due to Lemma~\ref{lemma:copy}, similar claim holds for $t$ when $\{ e_1, \dots, e_{i-1} \}$ are dead. $\lambda(\cE_{e_i}^{t}, t)$ will be $2^{i-1}$ copies of $\lambda(\cE_{e_i}^{1}, 1)$ and similarly 
  \begin{equation*}
    \lambda\left(\underset{e \in E^{(i+1,K)}}{\textstyle\bigcup}\cE_{e}^{t},\ \ t\right) = 2^{i-1} \lambda\left(\underset{e \in E^{(i+1,K)}}{\textstyle\bigcup}\cE_{e}^{1},\ \ 1\right)
  \end{equation*}hence: 
  \begin{align*}
    2^{i-1} \lambda\left(\cE_{e_i}^{1},\ \ 1\right) &= (e_i) + 2^{i-1} \lambda\left(\underset{{e \in E^{(i+1,K)}}}{\textstyle\bigcup}\cE_{e}^{1},\ \ 1\right)\\
    \lambda\left(\cE_{e_i}^{t},\ \ t\right) &= (e_i) +\lambda\left(\underset{{e \in E^{(i+1,K)}}}{\textstyle\bigcup}\cE_{e}^{t},\ \ t\right)
  \end{align*}
\end{proof}

\begin{lemma}
  \label{lemma:copy}
  At round $t$ when $m$ experts have died so far and $e_j \in E_a^t$, $\lambda(\cE_{e_j}^{t}, t)$ is equal to $2^m$ copies of $\lambda(\cE_{e_j}^{1}, 1)$ .
\end{lemma}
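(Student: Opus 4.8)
The statement asserts that at round $t$, after $m$ experts have died, the multi-set $\lambda(\cE_{e_j}^t, t)$ (the effective orderings currently predicting $e_j$, with dead and never-to-be-used prefix elements stripped) is exactly $2^m$ copies of $\lambda(\cE_{e_j}^1, 1)$. My plan is to prove this by induction on $m$, the number of deaths that have occurred so far, tracking how the multi-set of ``reduced'' orderings transforms at each night.

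First I would set up the base case $m = 0$: no expert has died, so $\lambda(\pi, t)$ only removes elements of $\pi$ that come after the first alive element (which at $t$ with no deaths means after the first element outright); thus $\lambda(\cE_{e_j}^t, t) = \lambda(\cE_{e_j}^1, 1)$, giving $2^0 = 1$ copy. For the inductive step, suppose the claim holds after $m-1$ deaths; consider the night on which the $m$\nth{} expert — call it $e_k$, the one with the smallest index still alive, by our ordering convention — dies. I would argue that each ordering $\pi \in \cE_{e_j}^t$ (with $j > k$, $e_j$ still alive) falls into exactly one of two classes according to whether, at the round just before $e_k$ dies, $\pi$ was predicting $e_j$ already or was predicting $e_k$. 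In the first case, $\lambda$ is unchanged by $e_k$'s death (the reduced ordering did not even contain $e_k$ as its head, and $e_k$ was already a stripped prefix element or appeared later but inactively). In the second case, $\pi$ took the form with head $e_k$ followed by a tail whose first alive element is $e_j$; after $e_k$ dies, $\lambda(\pi, \cdot)$ drops $e_k$ and the reduced ordering coincides with one already counted among those that were predicting $e_j$. The crux is to show these two contributions together double the multiplicity: by Lemma~\ref{lemma:effectivetime} (or directly from the constructive structure of Theorem~\ref{thm:number-of-effective-experts}), $|\cE_{e_k}^{t'}| = |\bigcup_{e \in E^{(k+1,K)}} \cE_e^{t'}|$ just before the death, and the orderings flowing out of the $e_k$-group redistribute so that the share landing on $e_j$ exactly matches (as a multi-set of reduced orderings) the share already present there. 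Hence the multiplicity goes from $2^{m-1}$ to $2 \cdot 2^{m-1} = 2^m$.

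The main obstacle I anticipate is bookkeeping the multi-set structure precisely through the $\lambda$ operator: one must be careful that $\lambda$ applied before versus after a death interacts correctly with the ``$+$'' and scalar-copy operators, and that the orderings in $\cE_{e_k}^{t'} \cap \cE_{e_j}^{t'+1}$ really are in bijection (respecting reduced form and cumulative behavior) with those already in $\cE_{e_j}^{t'}$. I would handle this by first proving, as a sub-claim, that for any night at which $e_k$ dies, $\lambda(\cE_{e_k}^{t'}, t'+1)$ decomposes uniformly across $E^{(k+1,K)}$ — essentially invoking the symmetry already established in the proof of Theorem~\ref{thm:HPU} together with Lemma~\ref{lemma:effectivetime} — and then pushing the $2^{m-1}$ multiplicity hypothesis through both summands. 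Once that sub-claim is in place, the doubling and hence the $2^m$ count follows by a short computation, completing the induction.
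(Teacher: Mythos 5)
Your plan matches the paper's proof essentially exactly: both argue by induction on the number of deaths, using the constructive decomposition $\lambda(\cE_{e_k}^{t'},t') = (e_k) + \bigcup_{e} \lambda(\cE_{e}^{t'},t')$ from Theorem~\ref{thm:number-of-effective-experts} to show that when $e_k$ dies its reduced orderings redistribute as an exact second copy of each surviving group, doubling the multiplicity from $2^{m-1}$ to $2^m$. One caution: in the paper Lemma~\ref{lemma:effectivetime} is \emph{derived from} Lemma~\ref{lemma:copy}, so you must take your parenthetical alternative (the constructive structure of Theorem~\ref{thm:number-of-effective-experts}) rather than citing Lemma~\ref{lemma:effectivetime}, which would be circular.
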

Recall that $\Pi_D$ is the set of all possible orderings of elements in set $D$ of length $|D|$ and similarly, $\cE_D$ is the set of all effective orderings with respect to $\Pi_D$.
\begin{proof}
  Due to the constructive building of $\cE_{e_i}^{1}$, $\lambda(\cE_{e_i}^{1})$ is equal to 
  \begin{equation}
    \label{basicrule}
    (e_i) + \lambda\left(\cE_{\{ e_{i+1}, \dots, e_K \}}^{1},\ \ 1\right) = (e_i) + \lambda\left(\underset{i+1 \le j \le K}{\textstyle\bigcup} \cE_{e_j}^{1},\ \ 1\right) 
    = (e_i) + \underset{i+1 \le j \le K}{\textstyle\bigcup} \lambda\left(\cE_{e_j}^{1},\ \ 1\right)
  \end{equation} 
  
  We use induction on $m$ to prove the claim. 
  
  \textit{Induction Basis: } The claim trivially holds when $m=0$.

  \textit{Induction Hypothesis: } When $e_1, e_2, \dots, e_{i-1}$ are dead before round $t-1$, for any $j \ge i$ we have $\lambda(\cE_{e_j}^{t-1},\ \ t-1) = 2^{i-1} \lambda(\cE_{e_j}^{1},\ \ 1)$

  \textit{Induction Step: } Assume that at round $t-1$, $e_i$ dies. 
  \begin{align}
    \lambda\left(\cE_{e_i}^1,\ \ 1\right) &= (e_i) + \underset{e \in E^{(i+1,K)}}{\textstyle\bigcup} \lambda\left(\cE^1_e,\ \ 1\right)\nonumber\\
    2^{i-1} \lambda\left(\cE_{e_i}^1,\ \ 1\right) &= (e_i) + 2^{i-1}\underset{e \in E^{(i+1,K)}}{\textstyle\bigcup} \lambda\left(\cE^1_e,\ \ 1\right)\nonumber\\
    2^{i-1} \lambda\left(\cE_{e_i}^1,\ \ 1\right) &= (e_i) + \underset{e \in E^{(i+1,K)}}{\textstyle\bigcup} 2^{i-1}\lambda\left(\cE^1_e,\ \ 1\right)\nonumber\\
    \lambda\left(\cE_{e_i}^{t-1},\ \ t-1\right) &=  (e_i) + \underset{e \in E^{(i+1,K)}}{\textstyle\bigcup} \lambda\left(\cE^{t-1}_e,\ \ t-1\right)
  \end{align}
  Where the second and forth equality follows by applying the induction hypothesis to the left and right sides of the first and third line and first equality holds due to Section~\ref{sec:number-of-effective}. Therefore when $e_i$ dies, any $\pi \in \cE^{t-1}_{e_i}$ we have $\pi \in \cE^{t}_{e}$ where $e \in E^{(i+1,K)}$ hence
  \begin{equation*}
    \underset{e \in E^{(i+1,K)}}{\textstyle\bigcup} \lambda\left(\cE^{t}_e,\ \ t\right) =  \underset{e \in E^{(i+1,K)}}{\textstyle\bigcup} 2\lambda\left(\cE^{t-1}_e,\ \ t-1\right) = \underset{e \in E^{(i+1,K)}}{\textstyle\bigcup}2^i \lambda\left(\cE^{1}_e,\ \ 1\right)
  \end{equation*} where the second equality is from the induction hypothesis. This is easy to see that each set in the union is independent from others, so $\lambda(\cE_{e_j}^t,\ \ t) = 2^i \lambda(\cE_{e_j}^1,\ \ 1)$ where $j>i$.
\end{proof}

\subsection{Proof of Theorem~\ref{thm:quantile} (Adapting to the number of nights $\mathbf{m}$)}

\begin{proof}[Proof of Theorem~\ref{thm:quantile}]
The idea is to use a simple counting argument. 
Let $\pi$ be a best permutation expert (it typically will not be unique). For the dying sequence that actually occurs, we will lower bound how many other permutations have the same behavior as this one (we call these behavioral copies). 
First, observe that if there are $m$ nights, then each permutation expert can only change the actual expert it uses for prediction at most $m$ times (for a total of $m+1$ different experts). 
Suppose that, over the course of the game, $\pi$ predicts as $e_{i_1}, e_{i_2}, \ldots, e_{i_l}$ where $l \leq m+1$. 
Now, consider those permutations that actually \emph{begin} as $e_{i_1}, e_{i_2}, \ldots, e_{i_l}$. As the first $l$ positions are fixed, there are $(K - l)!$ such permutations and hence at least $(K - l)!$ behavioral copies of $\pi$. Hence, if $\pi$ is the best expert, then we can compete with it using an $\varepsilon$-quantile bound with $\varepsilon = \frac{(K-l)!}{K!} = \prod_{r=0}^{l-1} \frac{1}{K - r} = \varepsilon_l$. 
Although we do not know the best choice of $\varepsilon$, as we run Hedge on top of $K$ copies of Hedge-Perm-Unknown and as one of the copies will use learning rate $\varepsilon_l$, we can compete with this optimally tuned copy with additional regret overhead of $\sqrt{T \log K}$. Moreover, a basic quantile bound exercise shows that the regret of the optimally tuned copy will be $\bigO(\sqrt{T (l+1) \log K})$, where $l \leq m$.
\end{proof}

\end{document}